\newtheorem{thm}{Theorem}
\newtheorem{cor}[thm]{Corollary}
\newtheorem{lem}[thm]{Lemma}
\newtheorem{prop}[thm]{Proposition}
\newtheorem{defn}[thm]{Definition}
\numberwithin{equation}{section}
\newcommand{\To}{\Rightarrow}
\newcommand{\non}{\mathord{\sim}}
\newcommand{\defeater}{\leadsto}
\newcommand{\set}[2][\relax]{\ensuremath{#1\{#2#1\}}}
\newif\ifpdf
\begin{document}

\title{Revision of Defeasible Logic Preferences}

\author{Guido Governatori$^{\bullet}$, Francesco Olivieri$^{\dag \bullet *}$,\\ Simone Scannapieco$^{\dag \bullet *}$\and Matteo Cristani$^{\dag}$}
\institute{$^{\dag}$Department of Computer Science, University of Verona, Italy\\ $^{\bullet}$NICTA, Queensland Research Laboratory, Australia\\ $^{*}$Institute for Integrated and Intelligent Systems, Griffith University, Australia }

\linenumbers

\maketitle

\begin{abstract}
There are several contexts of non-monotonic reasoning where a priority between
rules is established whose purpose is preventing conflicts.

One formalism that has been widely employed for non-monotonic reasoning is the
sceptical one known as Defeasible Logic. In Defeasible Logic the tool used for
conflict resolution is a preference relation between rules, that establishes
the priority among them.

In this paper we investigate how to modify such a preference relation in a
defeasible logic theory in order to change the conclusions of the theory
itself. We argue that the approach we adopt is applicable to legal reasoning
where users, in general, cannot change facts or rules, but can propose their
preferences about the relative strength of the rules.

We provide a comprehensive study of the possible combinatorial cases
and we identify and analyse the cases where the revision process is
successful.

After this analysis, we identify three revision/update operators and study
them against the AGM postulates for belief revision operators, to discover
that only a part of these postulates are satisfied by the three operators.

\end{abstract}
\smallskip
\noindent \textbf{Keywords.} Knowledge representation, non-monotonic reasoning, sceptical logics, belief revision.

\section{Introduction} 
\label{sec:Introduction}

A large number of real-life cases in legal reasoning, information security,
digital forensics, and even engineering or medical diagnosis, exhibit the two
following circumstances: (a) different persons have different preferences, and
(b) decision making depends upon the order the rules are applied. When the
decision mechanism is based on rules, and the rules are in conflict, then
inconsistencies may be generated, and decision making may require preferences
to solve/avoid conflicts. It may occur that, under certain circumstances,
using a particular set of preferences to solve a conflict does not result in
the desired/expected outcome. Accordingly, to revise the outcome we can revise
the underlying preferences.

Non-monotonic reasoning has been advanced for common-sense reasoning and
reasoning with partial and conflicting information. We can distinguish two
types of non-monotonic reasoning: credulous and sceptical. In credulous
non-monotonic reasoning, once a conflict arises, we independently explore the
two branches of the conflict, while in sceptical non-monotonic reasoning a
conflict must be solved before proceeding with the reasoning. Here we
concentrate on sceptical non-monotonic reasoning.

Typically sceptical non-monotonic formalisms are equipped with techniques to
address conflicts, where a conflict is a combination of reasoning chains
leading to a contradiction. The most common device to handle conflicts is a
preference or superiority relation over the elements used by the formalism to
reason. These elements can be formulae, axioms, rules or arguments, and the
preference relation states that one of such elements is to be preferred to
another one when both can be used.

In this research we focus on a specific rule-based non-monotonic formalism,
Defeasible Logic, but the motivation behind the particular technical
development applies in general to other rule-based non-monotonic formalisms
equipped with a priority about rules. Indeed, considering a rule based
formalism, knowledge is partitioned in \emph{facts} (describing immutable
propositions/statements about a case), \emph{rules} (describing relationships
between a set of premises and a conclusion), and a \emph{preference relation}
or \emph{superiority relation} (describing the relative strength of rules). A
revision operation\footnote{In general we will use the term \emph{revision
operation} to denote any operation that changes a theory. In Sections
\ref{sec:PrefDefRevisionPRDL} and \ref{sec:AGMComparisonPRDL} the term will be
understood in a specific technical sense.} transforms a theory by changing
some of its elements, be it the facts, rules, or superiority relation. Revision
based on change of facts corresponds to an update operation
\cite{KatsunoMendelzon}, revision based on modification of rules has been
investigated in \cite{ki99}, whilst to the best of our knowledge, revision of
non-monotonic theories based on modifications of the underlying superiority
relation has been neglected so far.

\medskip

In this paper we study revision of defeasible theories operating on the
superiority relation. We begin by arguing that, while little attention has
been dedicated to this topic, it has natural correspondences to reasoning
patterns in legal reasoning. After that, we investigate the type of operations
that are possible for this kind of revision.

\medskip

Once we introduced the operation, it is only natural to follow by establishing
which properties they enjoy. The properties that are significant for revision
operations in belief revision in order to be rational have already been
isolated in a systematic view proposed by Alchourr{\'o}n, G{\"a}rdenfors and
Makinson \cite{AGM}, namely the AGM postulates.


The AGM postulates where designed with classical logic in mind. Classical
logic is monotonic, therefore if we add new information which is
\emph{incompatible} with the old one, an inconsistency arises. In this
scenario, the sole way to recover consistency is to invoke a revision
operator. 


Conversely, due to the nature of non-monotonic reasoning, adding new
``incompatible'' information in a non-monotonic system usually does not
generate a contradiction within the theory, even if the result may not be
conceptually satisfactory.

Given the difference in nature between classical logic and non-monotonic
reasoning, it is of interest to investigate which AGM postulates apply to
non-monotonic reasoning, to what extent, and in which form they apply.

In the recent years, a few works addressed the issue of belief revision in
non-monoto\-nic logics where there is a general understanding that the
AGM postulates are not fully appropriate for non-monotonic reasoning. For
example, \cite{GovRot:igpl09} shows that belief revision methodologies are not
suitable to changes in specific significant non-monotonic theories, and
that it is possible to revise such theories fully satisfying the AGM
postulates, but then the outcome is utterly meaningless for their purposes.

Still, the matter whether and which postulates hold is far from being settled.
For example, \cite{DelgrandeSTW08} proposed an approach to belief revision of
logic programs under answer set semantics that is fully compliant with the
base AGM postulates for revision. On the other hand, Delgrande \cite[p.
568]{Delgrande10} asserts that a subset of postulates for belief revision is
not appropriate for belief revision of non-monotonic theories (and thus is
ignored in his work), while we will argue in
Section~\ref{sec:AGMComparisonPRDL} that the same postulates can be
adopted(Alternativa: are meaningful) in our approach. This suggests that the
suitability of AGM postulates to a belief revision approach for non-monotonic
reasoning is still debatable.

\vspace{2mm}

The paper is organised as follows: In Section~\ref{sec:Norms} we motivate that
reasoning over preferences on rules and on how to modify the preferences is a
natural reasoning pattern in legal reasoning. Then, in
Section~\ref{sec:DefLogPRDL} we introduce Defeasible Logic, the formalism
chosen for our investigation; in particular, we introduce new auxiliary proof
tags to describe derivations in Defeasible Logic. The new proof tags do not
modify the expressive power of the logic, but they identify patterns where
instances of the superiority relation contribute to the derivation of a
conclusion. Armed with this technical machinery, in
Section~\ref{sec:PrefDefRevisionPRDL} we start by proving that the problem of
revising a theory changing the superiority relation is, in general, an
NP-complete problem; secondly, we provide an exhaustive analysis of the cases
and conditions under which revision operations modifying only the superiority
relation are successful. Section~\ref{sec:AGMComparisonPRDL} analyses the AGM
postulates against the introduced operators. Section~\ref{sec:related_work}
overviews closely related approaches, and Section~\ref{sec:Conclusions}
concludes the paper with a summary of the achieved results, discussion of
related works and quick hints for future developments.

\section{Norms and Preferences in Legal Reasoning} 
\label{sec:Norms}

It has been argued \cite{ruleml09:rules_and_norms} that some aspects of legal
reasoning can be captured by non-monotonic rule-based formalisms. The main
intuition is that norms can be represented by rules, the evidence in cases by
facts, and that the superiority relation is induced by legal principles
determining how to solve conflicts between norms.

We take the stance that, typically in the legal reasoning domain, we do not
have control over the rules (norms) or their modification, but have some
control on how they can be used. An average citizen has no power to change the
Law, and has no power on what norms are effective in the jurisdiction she is
situated in. These powers instead are reserved to persons, entities and
institutions specifically designated to do so, for example the parliament and,
under some given constraints, also by judges (in Common Law juridical system,
especially).

However, a citizen can argue that one norm instead of another applies in a
specific case. This amounts to saying that one norm is to be preferred to the
other in the case.

\emph{Prima-facie} conflicts appear in legal systems for a few main reasons,
among which we can easily identify three major representatives: (1) norms from
different sources, (2) norms emitted at different times, and (3) exceptions.
These phenomena are well understood and have given rise to principles which
existed for a long time in legal theory and been used to solve such issues.
These principles are still used in many situations, such as an argument to
drive constitutional judgement against a given norm or a given sentence. Here
we list the three major legal principles, expressing preferences among rules
to be applied \cite{sartor:05}.
\begin{description}
\item[Lex Superior] This principle states that when there is a conflict
between two norms from different sources, the norms originating from the
legislative source higher in the legislative source hierarchy takes precedence
over the other norm. This means that if there is a conflict between a
federal law and a state law, the federal law prevails over the state law. %
 \item[Lex Posterior] According to this principle, a norm emitted after
another norm takes precedence over the older norm.

\item[Lex Specialis] This principle states that when a norm is limited to a
specific set of admissible circumstances, and under more general conditions
another norm applies, the most specific norm prevails.

\end{description}

Besides the above principles a legislator can explicitly establish that one
norm prevails over a conflicting norm.

The intuition behind these principles (and eventually others) is that when
there are two conflicting norms, and the two norms are applicable in a
specific case, we can apply one of these principles to create an instance of a
superiority relation that discriminates between the two conflicting norms.
However, there are further complications. What if several principles apply and
these produce opposite preferences? Do the preferences lead to opposite
outcomes of a case? These are examples of situations when revision of
preferences is relevant. The following example illustrates this situation.



%

Charlie is an immigrant son of an italian, and living in Italy, who is
interested in joining the Italian Army, based on Law 91 of 1992. However, his
application is rejected, based upon a constitutional norm (Article 51 of the
Italian Constitution). The two norms Law 91 and Article 51 are in conflict,
thus the Army's decision is based on the \emph{lex superior} principle.
Charlie appeals against the decision in court. The facts of the case are
undisputed, and so are the norms to be applied and their interpretation. Thus
the only chance for Bob, Charlie's lawyer, to overturn the decision is to
argue that Law 91 overrides Article 51 of the Constitution. Thus Bob
counter-argues appealing to the \emph{lex specialis} principle since Law 91 of
1992 explicitly covers the case of a foreigner who applies for joining the
Army for the purpose of obtaining citizenship.

The two arguments do not discuss about facts and rules that hold in the case.
They disagree about which rule prevails over the other, Article 51 of the
Constitution or Law 91. In particular, Bob's argument can be seen as an
argument where the relative strength of the two rules is reversed compared to
the argument of the Army's lawyer, and it can be used to revise the
previous decision.

While the mechanism sketched above concerns the notion of strategic reasoning,
where a discussant looks at the best argument to be used in a case to prove a
given claim, in this case, that one rule prevails over another rule. However,
the key aspect is that before embarking in this kind of arguments, one has to
ensure that changing a preference leads to a different outcome of the claim of
the case. It is not our aim to study how to justify preferences using
argumentation. In this work, we investigate if it is possible to modify the
extension of a theory (as represented by a defeasible theory) only through
changes on the superiority (preference) relation. Thus, we believe that our
framework is foundational for argumentation of preferences. This means that
one can first determine whether the outcome of a discussion can be turned in
her favour only changing the superiority relation, and then to figure out
which argument (if any) supports the preference.

\medskip

In the current literature about formalisms apt to model normative and legal
reasoning, a simple and efficient non-monotonic formalism which has been
discussed in the community is \textit{Defeasible Logic}. This system is
described in detail in the next sections.

One of the strong aspects of Defeasible Logic is its characterisation in terms
of argumentation semantics \cite{jlc:argumentation}. In other words, it is
possible to relate it to general reasoning structure in non-monotonic
reasoning which is based on the notion of admissible reasoning chain. An
admissible reasoning chain is an argument in favour of a thesis. For these
reasons, much research effort has been spent upon Defeasible Logic, and once
formulated in a complete way it encompasses other (sceptical) formalisms
proposed for legal reasoning
\cite{jlc:argumentation,DBLP:journals/jlp/AntoniouMB00,icail2011carneades}.

Most interestingly, in Defeasible Logic we can reach positive conclusions as
well as negative conclusions, thus it gives understanding to both accept a
conclusion as well as reject a conclusion. This is particularly advantageous
when trying to address the issues determined by reasoning conflicts.

It has been pointed out that the AGM framework for belief revision is not
always appropriate for legal reasoning \cite{GovRot:igpl09}. Moreover, it is
not clear how to apply AGM to preference revision. Accordingly, this paper
provides a comprehensive study of the conditions under which it is possible to
revise a defeasible theory by changing the superiority relation of the theory,

\section{Defeasible Logic}\label{sec:DefLogPRDL}

A defeasible theory consists of five different kinds of knowledge: facts,
strict rules, defeasible rules, defeaters, and a superiority relation
\cite{tocl}. Examples of facts and rules below are standard in the literature
of the field.

\emph{Facts} denote simple pieces of information that are considered always to
be true. For example, a fact is that ``Sylvester is a cat'', formally
$cat(Sylvester)$. A \emph{rule} $r$ consists of its \emph{antecedent} $A(r)$
which is a finite set of literals, an \emph{arrow}, and its \emph{consequent}
(or \emph{head}) $C(r)$, which is a single literal. A \emph{strict rule} is a
rule in which whenever the premises are indisputable (e.g., facts), then so is
the conclusion. For example,
\[
cat(X) \rightarrow mammal(X)
\] 
means that ``every cat is a mammal''. A \emph{defeasible rule} is a rule
that can be defeated by contrary evidence; for example, ``cats typically eat
birds'':
\[
cat(X) \To eatBirds(X).
\]

The underlying idea is that if we know that something is a cat, then
we may conclude that it eats birds, unless there is evidence proving otherwise. \emph{Defeaters} are rules that cannot be used to draw any conclusion.
Their only use is to prevent some conclusions, i.e., to defeat defeasible
rules by producing evidence to the contrary. An example is ``if a cat has just
fed itself, then it might not eat birds'':
\[
justFed(X) \defeater \neg eatBirds(X).
\]

The \emph{superiority relation} among rules is used to define where one rule
may override the conclusion of another one, e.g., given the defeasible rules
\begin{eqnarray}
	r: cat(X) &\To& eatBirds(X)\nonumber\\
	r^\prime: domesticCat(X) &\To& \neg eatBirds(X)\nonumber
\end{eqnarray}
which would contradict one another if Sylvester is both a cat and a
domestic cat, they do not in fact contradict if we state that $r^\prime$ wins against $r$,
leading Sylvester not to eat birds. Notice that in Defeasible Logic the
superiority relation determines the relative strength of two conflicting
rules, i.e., rules whose heads are \emph{complementary}. The complementary of a
literal $q$ is denoted by $\non q$; if $q$ is a positive literal $p$, then
$\non q$ is $\neg p$, and if $q$ is a negative literal $\neg p$ then $\non q$
is $p$.

Like in \cite{tocl}, we consider only a propositional version of this logic,
and we do not take into account function symbols. Every expression with
variables represents the finite set of its variable-free instances.

A \emph{defeasible theory D} is a triple ($F, R, >$), where $F$ is a finite
consistent set of literals called \emph{facts}, $R$ is a finite set of rules,
and $>$ is an acyclic superiority relation on $R$; given two rules $r$ and
$s$, we will use the infix notation $r>s$ to mean that $(r,s)\in >$. The set
of all strict rules in $R$ is denoted by $R_s$, and the set of strict and
defeasible rules by $R_{sd}$. We name $R[q]$ the rule set in $R$ with head
$q$. A \emph{conclusion} of $D$ is a tagged literal and can have one of the
following forms:
\begin{enumerate}[1.]
\item $+\Delta q$, which means that $q$ is definitely provable in $D$, i.e.,
there is a definite proof for $q$, that is a proof using facts, and strict
rules only;
\item $-\Delta q$, which means that $q$ is definitely not provable in  $D$ (i.e., a definite proof for $q$ does not exist);
\item $+\partial q$, which means that $q$ is defeasibly provable in $D$;
\item $-\partial q$, which means that $q$ is not defeasibly provable, or refuted in $D$.
\end{enumerate}

A \emph{proof} (or \emph{derivation}) is a finite sequence
\mbox{$P=(P(1), \dots, P(i))$} of tagged literals where for each $n$, $0\leq
n\leq i$ the following conditions (proof conditions) are
satisfied and $P(1..i)$ denotes the initial part of the sequence of
length $i$.

\begin{tabbing}
 $+\Delta$: \= If $P(n+1)=+\Delta q$ then\+\\
 (1) $q\in F$ or\\
 (2) $\exists r\in R_s[q]\forall a\in A(r): +\Delta a\in P(1..n)$.
\end{tabbing}

The negative proof conditions for $\Delta$ are the \emph{strong negation} of
the positive counterpart: this is closely related to the function that
simplifies a formula by moving all negations to an inner most position in the
resulting formula, and replaces the positive tags with the respective negative
tags, and the other way around \cite{ecai2000-5,igpl09policy}.

\begin{tabbing}
 $-\Delta$: \= If $P(n+1)=-\Delta q$ then\+\\
 (1) $q\notin F$ and\\
 (2) $\forall r\in R_s[q]\exists a\in A(r): -\Delta a\in P(1..n)$.
\end{tabbing}

The proof conditions just given are meant to represent forward
chaining of facts and strict rules ($+\Delta$), and that it is not
possible to obtain a conclusion just by using forward chaining of
facts and strict rules ($-\Delta$).

The proof conditions for $\pm\partial$ are as follows:

\begin{tabbing}
\=7890\=1234\=5678\=9012\=3456\=\kill
\>$+\partial$: \> If $P(n+1)=+\partial q$ then either \\
\>\>(1) $+\Delta q \in P(1..n)$, or \\
\>\>(2) \> (2.1) $-\Delta\non q \in P(1..n)$ and \\
\>\>\>(2.2) $\exists r\in R_{sd}[q]$ such that $\forall a \in
A(r): +\partial a\in P(1..n)$ and \\
\>\>\>(2.3) $\forall s \in R[\non q]$ either \\
\>\>\>\>(2.3.1) $\exists a\in A(s)$ such that $-\partial a\in P(1..n)$, or \\
\>\>\>\>(2.3.2) \= $\exists t\in R_{sd}[q]$ such that \\
\>\>\>\>\> $\forall a\in A(t): +\partial a\in P(1..n)$ and $t>s$.
\end{tabbing}

\begin{tabbing}
\=7890\=1234\=5678\=9012\=3456\=\kill
\>$-\partial$: \> If $P(n+1)=-\partial q$ then  \\
\>\>(1) $+\Delta q \not\in P(1..n)$ and either \\
\>\>(2) \> (2.1) $+\Delta\non q \in P(1..n)$, or \\
\>\>\>(2.2) $\forall r\in R_{sd}[q] \ \exists a \in
A(r): -\partial a\in P(1..n)$ or \\
\>\>\>(2.3) $\exists s \in R[\non q]$ such that \\
\>\>\>\>(2.3.1) $\forall a\in A(s): +\partial a\in P(1..n)$ and \\
\>\>\>\>(2.3.2) \= $\forall t\in R_{sd}[q]$ either \\
\>\>\>\>\> $\exists a\in A(t): -\partial a\in P(1..n)$, or $t\not> s$.
\end{tabbing}

The main idea of the conditions for a defeasible proof ($+\partial$) is that
there is an applicable rule (i.e., a rule where all the antecedents are
defeasibly proved) and every rule for the opposite conclusion is either
discarded (i.e., one of the antecedents is not defeasibly
provable) or defeated by a stronger applicable rule for the
conclusion we want to prove. The conditions for the negative proof tags (e.g.,
$-\partial$) show that any
systematic attempt to defeasibly prove the conclusion fails. The conditions
for $+\Delta$ and $-\Delta$, and $+\partial$ and $-\partial$ are related by
the \emph{Principle of Strong
Negation }introduced in \cite{ecai2000-5}. The key idea behind this principle
is that conclusions labelled with a negative proof tag are the outcome of a
constructive proof that the corresponding positive conclusion cannot be
obtained (and the other way around). The principle states that the inference
conditions for a pair of proof tags $+\#$ and $-\#$ are the strong negation of
the other, where the strong negation of a condition corresponds essentially to
the function that simplifies a formula by moving all negations to an
innermost position in the resulting formula (for the full details see
\cite{ecai2000-5}).

%

%

As usual, given a proof tag $\#$, a literal $p$ and a theory $D$, we use
$D\vdash\pm\# p$ to denote that there is a proof $P$ in $D$ where for some
line $i$, $P(i)=\pm\# p$. Alternatively, we say that $\pm\#p$ holds in
$D$, or simply $\pm\#p$ holds when the theory is clear from the context.

The set of positive and negative conclusion is called \emph{extension}.
Formally,
\begin{defn}\label{def:extension}
Given a defeasible theory $D$, the \emph{defeasible extension} of $D$ is
defined as
\[
E(D) = (+\partial, -\partial),
\]
where $\pm\partial = \set{l: l \text{ appears in } D \text{ and } D \vdash
\pm\partial l}$.
\end{defn}

Due to the nature of the revision operators discussed in this paper, the
extension does not contain strict conclusions since the only way to modify
them is to operate on the set of strict rules (i.e., addition or removal).
Similarly, the extension will not include information about the proof tags
introduced below. Such proof tags are useful to identify structures in
proofs and \emph{where} to operate in the theory, but they do not specify
\emph{what} is defeasibly provable, or not.

The inference mechanism of Defeasible Logic does not allow us to derive
inconsistencies unless the monotonic part of the logic is inconsistent, as
clarified by the following definition.
\begin{defn} 
	A defeasible theory $D$ is inconsistent iff there exists a literal $p$ such that (($D\vdash +\partial p$ and \mbox{$D\vdash +\partial \non p$}) iff ($D\vdash +\Delta p$ and $D\vdash +\Delta \non p$)).
\end{defn}

In this paper, we do not make use of strict rules, nor defeaters\footnote{The
restriction does not result in any loss of generality: (1) the superiority
relation does not play any role in proving definite conclusions, and (2) for
defeasible conclusions \cite{tocl} proves that it is always possible to remove
(a) strict rules from the superiority relation and (b) defeaters from the
theory to obtain an equivalent theory without defeaters and where the strict
rules are not involved in the superiority relation. A consequence of this
assumption is that the theories we work with in this paper are consistent.},
since every revision changes only the priority among defeasible rules (the
only rules that act in our framework), but we need to introduce eight new
types of auxiliary tagged literals, whose meaning is clarified in
Example~\ref{exConclusion}. As it will be clear in the remainder, they will be
significantly useful in simplifying the categorisation process, and
consequently, the revision calculus.

\begin{example}\label{exConclusion}
Let $D$ be the following defeasible theory:

\begin{align*}
    F = \emptyset \\
    R = \{ & r_{1}:\ \To a & & r_{7}:\ \To b\\
	& r_{2}:\ a \To c & & r_{8}:\ \To \neg c\\
	& r_{3}:\ c \To d & & r_{9}:\ \To \neg b\\
	& r_{4}:\ \To \neg a & & r_{10}:\ \To e\\
	& r_{5}:\ \To \neg d & & r_{11}:\ \To f \}\\
	& r_{6}:\ \neg d \To p & & \\
     > =\{ & (r_{1}, r_{4}), (r_{5}, r_{3})\}.\\
\end{align*}
To improve readability, from now on we use the following graphical
notation to represent a theory like the previous one:

\setlength{\arraycolsep}{1pt}
\[
\begin{array}{cccccccc}
\To_{r_1} & a      & \To_{r_2} & c      & \To_{r_3} & d      &           &\\
\vee      &        &           &        & \wedge    &        &           &\\
\To_{r_4} & \neg a &           &        & \To_{r_5} & \neg d & \To_{r_6} & p\\
          &        &           &        &           &        &           &\\
\To_{r_7} & b      & \To_{r_8} & \neg c &           &        &           &\\
          &        &           &        &           &        &           &\\
\To_{r_9} & \neg b &           &        &           &        &           &\\
          &        &           &        &           &        &           &\\
\To_{r_{10}}&      e & \To_{r_{11}}&  f     &           &        &           &
\end{array}
\]
where the $\wedge$ and $\vee$ symbols in the graphical representation of a
theory are not conjunctions and disjunctions but they represent the
superiority relation $>$. In the example, $\vee$ means that $r_1 >r_4$ and $\wedge$ that $r_5 >r_3$.
\end{example}

A conclusion in a defeasible proof can now take one or more of the following forms:

\begin{enumerate}[1.]
\setcounter{enumi}{4}
\item $+\Sigma q$, which means there is a reasoning chain supporting $q$;
\begin{itemize}
	\item $r_{1}$, $r_{2}$, $r_{3}$ form a chain supporting literal $d$ (+$\Sigma d$).
\end{itemize}

\item $-\Sigma q$, which means there is no reasoning chain supporting
$q$;
\begin{itemize}
	\item Since there are no rules for literal $\neg p$, then we have $-\Sigma \neg p$.
\end{itemize}

\item $+\sigma q$, which means there exists a reasoning chain supporting $q$ that is not defeated by any applicable reasoning chain attacking it;
\begin{itemize}
	\item $r_{1}$, $r_{2}$ and $r_{7}$, $r_{8}$ are two undefeated chains for $c$ and $\neg c$, respectively; Thus, we have $+\sigma c$, $+\sigma \neg c$.
\end{itemize}

\item $-\sigma q$, which means that every reasoning
chain supporting $q$ is attacked by an applicable reasoning chain;
\begin{itemize}
	\item Every chain for $d$ is defeated ($-\sigma d$, notice that there exists only one in this case).
\end{itemize}

\item $+\omega q$, which means there exists a reasoning chain supporting
$q$ that defeasibly proves all its antecedents;
\begin{itemize}
	\item In the chain $r_{1}$, $r_{2}$, $r_{3}$, only rule $r_{3}$ is defeated, hence $+\omega d$ holds.
\end{itemize}

\item $-\omega q$, which means that in every reasoning chain supporting $q$,
at least one of its antecedents is not defeasibly provable.
\begin{itemize}
	\item Since $+\partial b$ does not hold, we can conclude $-\omega \neg c$.
\end{itemize}

\item $+\varphi q$, which means there exists a reasoning chain that defeasibly
proves $q$ made of elements such that there does not exist  any rule for the
opposite conclusion;
\begin{itemize}
	\item There are no rules for $\neg e$, thus $+\varphi e$ holds.
\end{itemize}

\item $-\varphi q$, which means that for every reasoning chain supporting $q$ there exists an element such that a rule for the opposite conclusion could fire;
\begin{itemize}
	\item $r_{4}$ supports $\neg a$, hence we have $-\varphi a$.
\end{itemize}
\end{enumerate}

The tagged literals are formally defined by the following proof conditions.
Again, the negative counterparts are obtained by the principle of strong
negation. An important consequence of using this principle to formulate the
conditions for asserting tagged literals is that for any literal $p$ and any
proof tag $\#$, it is not possible to have both $+\#p$ and $-\#p$ (the
interested reader is referred to \cite{ecai2000-5,igpl09policy}). 

Such proof tags identify structures of rules and demonstrations that are
significant for the revision operations when we change the superiority
relation. For example, $+\Sigma p$ means that we could use Modus Ponens (or
\emph{forward chaining}) for deriving $+\partial p$.

\begin{tabbing}
  $+\Sigma$: \= If $P(n+1)=+\Sigma q$ then\+\\
  (1) $+\Delta q \in P(1..n)$ or\\
  (2) $\exists r\in R_{sd}[q] $ such that $\forall a\in A(r): +\Sigma a\in P(1..n)$
\end{tabbing}

\begin{tabbing}
  $-\Sigma$: \= If $P(n+1)=-\Sigma q$ then\+\\
  (1) $+\Delta q \not\in P(1..n)$ and\\
  (2) $\forall r\in R_{sd}[q]: \exists a\in A(r) $ such that $ -\Sigma a\in P(1..n)$
\end{tabbing}
The definitions of $\pm\Sigma$ formalise the concept of \emph{chain} leading
to a given literal. 

With respect of the analysis on how to change a theory by
only acting on the superiority relation, if there does not exist any chain
leading to a literal $p$ (i.e., $-\Sigma p$ holds), then no modification of
the theory is possible to prove $p$.

\begin{tabbing}
\=7890\=1234\=5678\=9012\=3456\=\kill
  $+\sigma$: \= If $P(n+1)=+\sigma q$ then\+\\
  (1) $+\Delta q \in P(1..n)$ or\\
  (2) \=  $\exists r\in R_{sd}[q]$ such that\+\\ 
	  (2.1) $\forall a\in A(r): +\sigma a\in P(1..n)$
      and\\
      (2.2) \= $\forall s\in R[\non q] \exists a\in A(s)$ such that \\ \> $-\partial a\in P(1..n)$ or $s \not >r$.
\end{tabbing}

\begin{tabbing}
\=7890\=1234\=5678\=9012\=3456\=\kill
  $-\sigma$: \= If $P(n+1)=-\sigma q$ then\+\\
  (1) $+\Delta q \not\in P(1..n)$ and\\
  (2) \= $\forall r\in R_{sd}[q]:$\+\\ 
	  (2.1) $\exists a\in A(r) $ such that $ -\sigma a\in P(1..n)$
      or\\
      (2.2) \= $\exists s\in R[\non q]$ such that \\
      \> $\forall a\in A(s): +\partial a\in P(1..n)$ and $ s>r$.
\end{tabbing}
Notice that the definitions given above for $\pm\sigma$ are weak forms of the
notion of support proposed in \cite{aaai2000,ecai2000-5} for the definition of
an ambiguity propagating variant of Defeasible Logic, in the sense that these
definitions are less selective than the ones of \cite{aaai2000}. 

The undefeated chain that allows to state $+\sigma p$ may be a good candidate
for the revision process in order to defeasibly prove $p$.

\begin{tabbing}
  $+\omega$: \= If $P(n+1)=+\omega q$ then\+\\
  (1) $+\Delta q \in P(1..n)$ or\\
  (2) $\exists r\in R_{sd}[q] $ such that $ \forall a\in A(r): +\partial a\in P(1..n)$.
\end{tabbing}

\begin{tabbing}
  $-\omega$: \= If $P(n+1)=-\omega q$ then\+\\
  (1) $+\Delta q \not\in P(1..n)$ and\\
  (2) $\forall r\in R_{sd}[q]: \exists a\in A(r) $ such that $ -\partial a\in P(1..n)$.
\end{tabbing}
The chain that allows to state $+\omega p$ represents a defeasible proof for
$p$ that can only fail on the last derivation step. Thus, possible
modifications can focus on this last step instead of considering
the whole chain.

\begin{tabbing}
  $+\varphi$: \= If $P(n+1)=+\varphi q$ then\+\\
  (1) $+\Delta q \in P(1..n)$ or\\
  (2) \=  $\exists r\in R_{sd}[q]$ such that \+\\
	  (2.1) $\forall a\in A(r): +\varphi a\in P(1..n)$
      and\\
      (2.2) \= $\forall s\in R[\non q]:\exists a\in A(s) $ such that $
        -\Sigma a\in P(1..n)$.
\end{tabbing}

\begin{tabbing}
  $-\varphi$: \= If $P(n+1)=-\varphi q$ then\+\\
  (1) $+\Delta q \not\in P(1..n)$ and\\
  (2) \=  $\forall r\in R_{sd}[q]:$\+\\
	  (2.1) $\exists a\in A(r) $ such that $ -\varphi a\in P(1..n)$ or\\
      (2.2) \= $\exists s\in R[\non q] $ such that $\forall a\in A(s):
        +\Sigma a\in P(1..n)$.
\end{tabbing}

The definition of $+\varphi$ ensures that it is not possible to have
a counter-argument for a reasoning chain, i.e., a proof, for a literal tagged
with it. In particular, we can not have a direct attack, nor an attack to one
of its arguments. Therefore, no modification on the superiority relation is
possible to reject a literal tagged with $+\varphi$.

Given the above definitions, it is straightforward to derive the
implication chains reported below in Figure \subref*{figImplications+} --
\subref{figImplications-} using techniques presented in
\cite{tocl:inclusion}.

\begin{figure}[htp]
\centering \subfloat[Positive implication chain]{
\begin{tikzpicture}[scale=0.8]
  \node (D) at (-4.5,0) {$+\Delta$};
  \node (phi) at (-3,0) {$+\varphi$};
  \node (par) at (-1.5,0) {$+\partial$};
  \node (a) at (0,1) {$+\omega$};
  \node (S) at (0,-1) {$+\sigma$};
  \node (o) at (1.5,0) {$+\Sigma$};
 \begin{scope}[>=stealth,->,thick,shorten >=2pt,shorten <=2pt]
  \draw (D) -- (phi);
  \draw (phi) -- (par);
  \draw (par) -- (a);
  \draw (par) -- (S);
  \draw (a) -- (o);
  \draw (S) -- (o);
 \end{scope}
\end{tikzpicture}
\label{figImplications+}}
\quad \medskip \subfloat[Negative implication chain]{
\begin{tikzpicture}[scale=0.8]
  \node (D) at (4.5,0) {$-\Delta$};
  \node (phi) at (3,0) {$-\varphi$};
  \node (par) at (1.5,0) {$-\partial$};
  \node (a) at (0,1) {$-\omega$};
  \node (S) at (0,-1) {$-\sigma$};
  \node (o) at (-1.5,0) {$-\Sigma$};
 \begin{scope}[>=stealth,->,thick,shorten >=2pt,shorten <=2pt]
  \draw (o) -- (a);
  \draw (o) -- (S);
  \draw (a) -- (par);
  \draw (S) -- (par);
  \draw (par) -- (phi);
  \draw (phi) -- (D);
 \end{scope}
\end{tikzpicture}\label{figImplications-}}
\caption{Implication chains.}\label{figImplications}
\end{figure}
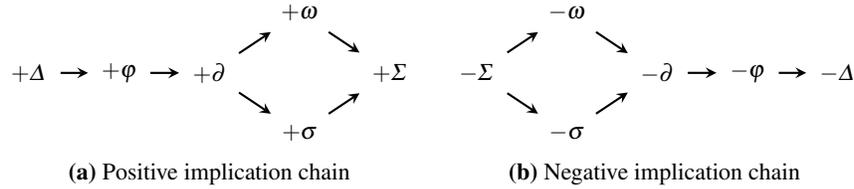

One could be tempted to say that $+\sigma$ implies $+\omega$ (and
symmetrically, $-\omega$ implies $-\sigma$). This is not the case. Indeed, if
we consider theory $D$ of Example~\ref{exConclusion}, we have: (i.) $+\omega
d$ and $-\sigma d$, (ii.) $+\sigma \neg c$ and $-\omega \neg c$.

To better explain how the new proof tags behave, we report in
Table~\ref{tab:Conclusion} the set of all conclusions. For each literal, we
only report the proof tag which is minimal with respect to the orderings given
in Figure~\ref{figImplications}. For example, $+\partial a$ means that we
prove $+\omega a,\, +\sigma a,\, +\Sigma a$, but $-\varphi a$. If no tag is reported, then it is not possible to derive the literal with any tags with respect to the ordering given in Figure~\ref{figImplications}.

\begin{table}
\begin{center}
\begin{tabular}{cccccccc}
\toprule[.5mm]
    & $a$  & $b$ & $c$  & $d$  & $e$  & $f$  & $p$\\
 \midrule
 $+$ & $+\partial$ & $+\sigma$ & $+\partial$ & $+\omega$ & $+\varphi$ & $+\varphi$ & $+\partial$\\
 $-$ & $-\varphi$ & $-\partial$ & $-\partial$ & $-\sigma$ & & & $-\varphi$\\
 \midrule[.5mm]  
     & $\neg a$ & $\neg b$ & $\neg c$ & $\neg d$& $\neg e$ & $\neg f$ & $\neg p$\\
 \midrule
 $+$ & $+\omega$ & $+\sigma$ & $+\sigma$ & $+\partial$ &  & & \\
 $-$ & $-\partial$ & $-\partial$ & $-\omega$ & $-\varphi$ & $-\Sigma$ & $-\Sigma$ & $-\Sigma$\\
 \bottomrule[.5mm]
\end{tabular}
\end{center}\caption{Conclusions for literals in Example~\ref{exConclusion}.}\label{tab:Conclusion}
\end{table}



We report now some theoretical results: these are useful during the revision
process described in Section \ref{sec:PrefDefRevisionPRDL}.
Proposition~\ref{propFi_p_Implies-Sigma_Np} highlights the fact that, given a
theory $D$ and literal $q$, if $D\vdash +\varphi q$ then there are no chains
for the complementary. Notice that, in general, the opposite does not hold (as
for literal $p$ in Example \ref{exConclusion}).

\begin{prop}\label{propFi_p_Implies-Sigma_Np}
Given a consistent defeasible theory $D$, if $D \vdash +\varphi p$
for a literal $p$ (with $p \not\in F$), then $D \vdash -\Sigma\non p$.
\end{prop}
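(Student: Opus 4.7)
The plan is to unpack the defining proof condition of $+\varphi p$ and read off the two clauses of $-\Sigma\non p$ almost directly, using consistency as the sole auxiliary tool. Unfolding $-\Sigma\non p$, I need to establish (i) $+\Delta\non p$ does not hold, and (ii) for every $r\in R_{sd}[\non p]$ there exists $a\in A(r)$ with $-\Sigma a$.

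First, since $p\notin F$ and the paper considers only defeasible rules (no strict rules, no defeaters), the base clause $+\Delta p$ of the $+\varphi$-condition cannot apply: deriving $+\Delta p$ with $p\notin F$ would demand a strict rule with head $p$. Hence $+\varphi p$ must be obtained through clause~(2), which gives in particular that for every $s\in R[\non p]$ there exists $a\in A(s)$ such that $-\Sigma a$ holds. Part~(ii) then follows at once from the inclusion $R_{sd}\subseteq R$, which yields $R_{sd}[\non p]\subseteq R[\non p]$: any rule in $R_{sd}[\non p]$ is already an $s\in R[\non p]$, and so inherits an antecedent $a$ with $-\Sigma a$.

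For part~(i), I appeal to the implication chain of Figure~\ref{figImplications} together with consistency. From $+\varphi p$ we obtain $+\partial p$. Since $+\Delta p$ cannot hold in our simplified setting (again, $p\notin F$ and no strict rules), the consistency condition forbids $+\partial\non p$; applying the implication chain once more, $+\Delta\non p$ is ruled out as well, as it would entail $+\partial\non p$. The only mildly delicate point---and the main obstacle---is reading the paper's consistency condition correctly: it is phrased as an equivalence between defeasible and strict inconsistency, and one has to use the contrapositive in the right direction to conclude that in the absence of strict inconsistency there can be no defeasible inconsistency either. Once this is laid out, the proposition collapses to a direct matching of the proof-condition clauses for $+\varphi$ and $-\Sigma$.
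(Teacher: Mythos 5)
Your proof is correct and follows essentially the same route as the paper's: the heart of both arguments is that clause (2.2) of the $+\varphi$ proof condition, applied at the top level to $p$, directly supplies clause (2) of $-\Sigma\non p$ via $R_{sd}[\non p]\subseteq R[\non p]$. You are in fact somewhat more careful than the paper, which silently omits clause (1) of $-\Sigma\non p$ (that $+\Delta\non p$ fails); your discharge of it via the implication chain and the consistency definition is a legitimate, if slightly heavier, way to close that gap.
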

\begin{proof}
The proof straightly follows from the definition of $+\varphi$, condition (2.2). This condition must hold for each element in the chain, as well as for $p$.
\end{proof}

The next proposition formally states the following idea: given a defeasibly
proved literal $p$ and a chain leading to $\non p$ with all the antecedents
defeasibly proved, then such a chain is defeated by a priority rule at the
last proof step (by the rule proving $p$).

\begin{prop}\label{propPartial_p_AND_Omega_Np_implies_-sigma_Np}
Given a consistent defeasible theory $D$, if $D\vdash +\partial p$ and 
$D\vdash +\omega\non p$ for a literal $p$ (with $p \not\in F$), then $D\vdash -\sigma\non p$.
\end{prop}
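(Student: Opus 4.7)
The plan is to verify both conjuncts of the $-\sigma$ proof condition for $\non p$.

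For the first conjunct, $+\Delta \non p \notin P$ is immediate from the hypothesis $D \vdash +\partial p$: its subclause (2.1) forces $-\Delta \non p$, and by consistency of the $\Delta$ layer this rules out $+\Delta \non p$.

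For the second conjunct, I would fix an arbitrary $r \in R_{sd}[\non p]$ and verify that clause (2.1) or (2.2) of $-\sigma \non p$ holds for it. The natural move is to apply clause (2.3) of the $+\partial$ proof condition used to derive $+\partial p$, instantiated at $r \in R[\non p]$. This yields a dichotomy:
\begin{enumerate}
\item[(a)] some antecedent $a \in A(r)$ satisfies $-\partial a \in P$, or
\item[(b)] there is a rule $t \in R_{sd}[p]$ with $t > r$ whose antecedents are all defeasibly provable.
\end{enumerate}
Case (b) is the easy side: since $\non\non p = p$ and $R[\non\non p] = R[p]$, the rule $t$ directly witnesses clause (2.2) of the definition of $-\sigma \non p$ for the fixed $r$.

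Case (a) is where the hypothesis $+\omega \non p$ is brought to bear, and it is also the main obstacle: from $-\partial a$ alone one cannot generally infer $-\sigma a$, since the implication chain of Figure~\ref{figImplications} runs from $-\sigma$ to $-\partial$ and not the reverse. The plan is to proceed by induction on the length of the derivation of $-\partial a$. The hypothesis $+\omega \non p$ guarantees at least one rule $r^{*} \in R_{sd}[\non p]$ with all antecedents $+\partial$, and such an $r^{*}$ necessarily falls under case (b), producing a stronger rule $t^{*} \in R_{sd}[p]$ with all antecedents $+\partial$. That witness, together with the inductive application of the proposition to the (shorter) derivation showing $-\partial a$, is what I expect to promote $-\partial a$ to $-\sigma a$ and so satisfy clause (2.1) of $-\sigma \non p$. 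Making the induction go through cleanly, in particular handling antecedents whose $-\partial$ derivation does not visibly use a superiority instance, is the step where I anticipate the most technical care will be needed.
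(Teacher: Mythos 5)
Your decomposition is actually more faithful to the formal definition of $-\sigma$ than the paper's own argument: you correctly insist on discharging clause (2) of $-\sigma\non p$ for \emph{every} $r\in R_{sd}[\non p]$, whereas the paper only examines the single rule $s$ witnessing $+\omega\non p$ (all antecedents $+\partial$-provable), notes that clause (2.3.1) of $+\partial p$ must fail for that $s$, and extracts from (2.3.2) an applicable $t\in R_{sd}[p]$ with $t>s$ --- i.e.\ your case (b) applied to that one rule. The remaining rules for $\non p$, your case (a), are never matched against the proof condition for $-\sigma\non p$; the paper's closing remark that the chain must ``lose on the last proof step'' is an informal gloss, not a verification of clause (2).

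Your instinct that case (a) is the real obstacle is right, but the proposed repair --- an induction that promotes $-\partial a$ to $-\sigma a$ with the help of the $+\omega\non p$ witness --- cannot be made to work, because the implication genuinely fails and the witness gives no leverage on rules other than $r^{*}$. Concretely, take $F=\emptyset$, $R=\{r_1:\ \To a,\ r_2:\ \To\non a,\ r_3:\ a\To\non p,\ r_4:\ \To\non p,\ r_5:\ \To p\}$ and ${>}=\{(r_5,r_4)\}$. Then $D\vdash+\partial p$ (rule $r_3$ is discarded since $-\partial a$ holds, and $r_4$ is beaten by $r_5$) and $D\vdash+\omega\non p$ (witness $r_4$), yet $-\sigma\non p$ is not derivable: for $r_3$ its only antecedent satisfies $+\sigma a$, so clause (2.1) fails, and no applicable rule for $p$ is superior to $r_3$, so clause (2.2) fails; indeed $+\sigma\non p$ holds in this theory. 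So no completion of your case (a) exists. What the hypotheses actually yield is only the weaker, informal reading stated before the proposition --- the rule witnessing $+\omega\non p$ is defeated at the last step by a superiority instance in favour of an applicable rule for $p$ --- and your attempt, by being more careful than the paper, exposes exactly where the stated formal claim outruns the argument given for it.
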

\begin{proof}
By definition of $+\partial$, we have that the condition below
\begin{tabbing}
\=7890\=1234\=5678\=9012\=3456\=\kill
(2.3) $\forall s \in R[\non p]$ either \\
\>\>(2.3.1) $\exists a\in A(s): -\partial a\in P(1..n)$ or \\
\>\>(2.3.2) $\exists t\in R_{sd}[p]$ such that \\
\>\>\> $\forall a\in A(t): +\partial a\in P(1..n)$ and $t>s$
\end{tabbing}

\noindent holds for $p$. In fact condition (2.3.2) has to be true
since we know condition (2.3.1) is not, because
\[
\left.
\begin{matrix}
D\vdash +\partial p \text{ implies }\exists r \in R[p]. \forall a \in A(r): +\partial a\\
D\vdash +\omega\non p \text{ implies }\exists s \in R[\non p]. \forall a \in
A(s): +\partial a
\end{matrix} \right\} \text{ thus }\]
\[\exists t \in R[p]. \forall a \in A(t): +\partial
a \text{ and } t>s.
\]

\noindent This is condition (2.2) for $-\sigma \non p$. Moreover, since all the
premises of $\non p$ are defeasibly proved by hypothesis and we have proved
that the chain is defeated, then it has to loose on the last proof step.
\end{proof}

We now capture the concept of a derivation based on a contradiction. To do so,
we begin by defining what the meaning of dependency between literals is;
afterwards, we look at how the notion of $\partial$\emph{-unreachability}
defines literals whose derivation is based upon an inconsistency.

\begin{defn}\label{def:dependency}
Let $a$ and $b$ be two literals. Then $a$ \emph{depends on} $b$ iff \emph{(1)} $b = a$ or \emph{(2)} $\,\forall r \in R[a]$, either \emph{(2.1)} $b \in
A(r)$, or \emph{(2.2)} $\exists c$ such that $c \in A(r)$, and $c$ depends on $b$.
\end{defn}

\noindent The following result shows that a defeasibly proved literal also implies the provability of all literals it depends on. In other words, the
property of dependency given above propagates backwards the defeasible
provability of literals.

\begin{prop}\label{prop:partialp_depends_on_q_partialq} 
Given a defeasible theory $D$, if $D \vdash +\partial p$ and $p$ depends on
$q$, then \mbox{$D \vdash +\partial q$}.
\end{prop}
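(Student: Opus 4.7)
The plan is to induct on the minimum depth $n$ of a derivation of "$p$ depends on $q$" in the least‑fixed‑point reading of Definition~\ref{def:dependency}. The base case $n=0$ corresponds to clause (1) of the definition, namely $q = p$, which gives $D \vdash +\partial q$ immediately from the hypothesis $D \vdash +\partial p$.

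For the inductive step, assume $p \neq q$, so clause (2) of Definition~\ref{def:dependency} applies: every rule $r \in R[p]$ has either $q \in A(r)$ or some antecedent $c \in A(r)$ that depends on $q$ via a strictly shorter derivation. Since $D \vdash +\partial p$ and the theories under consideration contain only defeasible rules, clause (2.2) of the $+\partial$ proof condition supplies a specific rule $r \in R_{sd}[p]$ such that $+\partial a$ holds for every $a \in A(r)$. I then apply the dependency clause to this particular $r$: if $q \in A(r)$, then $D \vdash +\partial q$ is immediate; otherwise there exists some $c \in A(r)$ that depends on $q$ with a shorter derivation, and because $c$ is an antecedent of the applicable rule $r$ we already have $D \vdash +\partial c$, so the induction hypothesis yields $D \vdash +\partial q$.

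The main subtlety to watch is the case where $+\partial p$ arises from clause (1), i.e.\ from $+\Delta p$, which in this rule‑set reduces to $p \in F$: the $+\partial p$ derivation then does not by itself exhibit an applicable rule through which to thread the dependency witness. I would handle this by adopting the non‑vacuous reading of \emph{depends on}, ruling out the degenerate $R[p] = \emptyset$ situation in which the universal quantifier in clause (2) is vacuously satisfied and $p$ would formally depend on every literal. Apart from this technicality, the proof is a direct combination of the dependency clause with clause (2.2) of the $+\partial$ condition; the only real effort is formalising the induction measure on the least‑fixed‑point unfolding of \emph{depends on}, which is the step I expect to require the most care.
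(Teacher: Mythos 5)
Your proof is correct and follows essentially the same route as the paper's: an induction on the unfolding depth of the dependency relation (the paper calls it the ``degree''), combined with clause (2.2) of the $+\partial$ condition to extract an applicable rule through whose antecedents the dependency is threaded down to $q$. The one place you go beyond the paper is in flagging the clause-(1)/$+\Delta$ case and the vacuous-dependency case $R[p]=\emptyset$, which the paper's proof silently ignores even though they genuinely require a restriction such as the one you propose (e.g.\ $F=\{p\}$, $R=\{a\To p\}$ gives $+\partial p$ with $p$ depending on $a$ but $-\partial a$).
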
 

\begin{proof}
The proof is by case inspection of Definition~\ref{def:dependency}. If clause
(1) holds, the claim trivially follows. For the other cases, the proof is by
induction on the degree of dependency between literals. A literal $a$ depends
on $b$ \emph{with degree} 1 if $a$ depends on $b$ and there exists a rule $r$,
with $C(r) = a$ and $b \in A(r)$. A literal $a$ depends on $b$ \emph{with
degree} $n+1$ if $a$ depends on $b$ and there is a literal $c$ such that $a$
depends on $c$ with degree 1 and $c$ depends on $b$ with degree $n$.

For the inductive base (i.e., $p$ depends on $q$ with degree 1), $+\partial p$
means that there is a rule for $p$ with every antecedent defeasibly proved.
Thus, $D \vdash +\partial q$.

For the inductive step, suppose that the property holds up to degree $n$ and
$p$ depends on $q$ with degree $n+1$. By definition, there exists a literal
$c$ such that $p$ depends on $c$ with degree 1, thus $D \vdash +\partial c$
(given $D\vdash +\partial p$ by hypothesis) and $c$ depends on $q$ with degree
$n$. Thus, by inductive hypothesis, $D \vdash +\partial q$. \end{proof}

The next definition identifies literals only
depending on contradictions. For example, consider the theory with the
following rule:
\[
a, \neg a, b\To_{r} p.
\]
For deriving $+\partial p$ we need both $+\partial a$ and $+\partial \neg a$,
and this is possible only in the case that the theory is inconsistent.
However, we have also to cater for situations where the dependency is not
direct, for example in theories like
\[
a \To_{r_{1}} b \qquad \neg a,b\To_{r_{2}} p.
\] 
\begin{defn}\label{def:unreach}
A literal $p$ is \emph{$\partial$-unreachable} iff $\,\forall r
\in R[p],$ either \emph{(1)} $\exists l,\exists a,b \in A(r)$ such that
\emph{(1.1)} $a$ depends on $l$, and \emph{(1.2)} $b$ depends on $\non l$, or
\emph{(2)} $\exists d \in A(r)$ such that $d$ is $\partial$-unreachable. Otherwise, we define $p$ to be $\partial$-reachable.
\end{defn}

\noindent The result below formalises the relationship between
$\partial$-unreachable literals and inconsistent theories.

\begin{prop}\label{prop:inconsitence-unreachable}
Given a theory $D$, let $p$ be a $\partial$-unreachable literal. If $D\vdash
+\partial p$, then $D$ is inconsistent.
\end{prop}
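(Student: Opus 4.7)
The plan is to argue by induction on the position at which $+\partial p$ first appears in a derivation $P$ of $D$. Since this paper restricts attention to theories without strict rules or defeaters, the only way $+\Delta p$ can hold is via $p \in F$; I will treat the inductive case in which clause (2) of the proof condition for $+\partial$ is used, and note that the fact-only case is handled either by the implicit assumption $p \notin F$ (parallel to Proposition~\ref{propFi_p_Implies-Sigma_Np}) or, if both $p$ and $\non p$ were to be facts, the inconsistency of $D$ would be immediate.

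So suppose $P(n+1) = +\partial p$ is obtained through clause (2.2) of $+\partial$: there is a rule $r \in R_{sd}[p]$ with $+\partial a \in P(1..n)$ for every $a \in A(r)$. Since $p$ is $\partial$-unreachable, one of the two sub-clauses of Definition~\ref{def:unreach} applies to this very $r$.
\begin{itemize}
\item If clause (1) applies, there exist $l$ and $a, b \in A(r)$ with $a$ depending on $l$ and $b$ depending on $\non l$. From $+\partial a$ and $+\partial b$, Proposition~\ref{prop:partialp_depends_on_q_partialq} gives $D \vdash +\partial l$ and $D \vdash +\partial \non l$, which is precisely the definition of inconsistency.
\item If clause (2) applies, there is $d \in A(r)$ which is itself $\partial$-unreachable. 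Because $d \in A(r)$, the literal $+\partial d$ must appear at some position strictly smaller than $n+1$, so the inductive hypothesis applies to $d$ and yields that $D$ is inconsistent.
\end{itemize}

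The main obstacle, and the reason one must use induction rather than a one-shot unfolding, is that the $\partial$-unreachability of $p$ can propagate the burden of the argument to an antecedent $d$ via clause (2), and that antecedent may in turn be $\partial$-unreachable only by clause~(2) as well. What makes the recursion well-founded is the observation that the proof-theoretic chain used to justify $+\partial p$ in $D$ is finite: every recursive appeal moves strictly leftwards in the derivation sequence $P$. A subtlety worth checking explicitly is that Proposition~\ref{prop:partialp_depends_on_q_partialq} can indeed be chained through the dependency of $a$ on $l$ (and $b$ on $\non l$) starting from $+\partial a$ (resp.\ $+\partial b$), which is exactly what that proposition provides; no additional hypothesis on the degree of dependency is required.
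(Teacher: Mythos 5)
Your proof is correct and takes essentially the same approach as the paper's: the same case split on the two clauses of Definition~\ref{def:unreach}, the same appeal to Proposition~\ref{prop:partialp_depends_on_q_partialq} to extract $+\partial l$ and $+\partial \non l$ in the first case, and the same recursive descent into a $\partial$-unreachable antecedent in the second. The only (cosmetic) difference is the induction measure --- you induct on the position of $+\partial p$ in the derivation sequence, while the paper inducts on the number of $\partial$-unreachable literals in the derivation --- and your measure is, if anything, the more cleanly well-founded of the two.
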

\begin{proof}
The proof is by induction on the number of $\partial$-unreachable literals in
a derivation.

For the base case, $p$ is the only $\partial$-unreachable literal in its
derivation. Given that $D \vdash +\partial p$, there is a rule for $p$ such
that all its antecedents are provable. By Definition~\ref{def:unreach}, for
every rule for $p$ there are two antecedents $a$ and $b$ depending on a
literal $l$ and its complement, respectively. Thus, we have both $+\partial a$
and $+\partial b$. By Proposition~\ref{prop:partialp_depends_on_q_partialq},
we have $D\vdash +\partial l$ and $D\vdash +\partial \non l$, then $D$ is
inconsistent.

For the inductive step, we assume that the property holds up to $n$
$\partial$-unreachable literals, and $p$ is the $(n+1)^{th}$
$\partial$-unreachable literal. Beside the case we examined in the inductive
base, we have to consider that the antecedent of a rule contains a
$\partial$-unreachable literal $d$ and $D\vdash +\partial d$. Thus, $d$ falls
under the inductive hypothesis, therefore $D$ is inconsistent.
\end{proof}

\noindent The following proposition states that if there is a
$\partial$-reachable literal $p$ with at least one supporting chain, then it
is always possible to defeasibly prove $p$. In other words, the problem of
modifying the superiority relation to pass from $-\partial p$ to $+\partial p$
(or from $+\partial \non p$ to $+\partial p$) has always a solution, provided
that there exists a non-contradictory support.

\begin{prop}\label{prop:+sigma+partial}
Given a consistent defeasible theory $D = (F, R, >)$ and a
$\partial$-reachable literal $p$ with $D \vdash +\Sigma p$, there exists
a theory $D' = (F, R, >')$ such that $D' \vdash +\partial p$.
\end{prop}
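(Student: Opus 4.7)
The plan is to construct $>'$ by fixing a canonical support tree for $p$ and then boosting every rule on that tree over every attack on it. First, exploiting the $\partial$-reachability of $p$ together with $D \vdash +\Sigma p$, I would recursively build a finite tree $T$ rooted at $p$: at each literal $q$ reached, pick a rule $r_q \in R[q]$ witnessing the $\partial$-reachability of $q$, namely a rule whose antecedents contain no pair depending on complementary literals and are themselves $\partial$-reachable. Because $+\Sigma q$ propagates backwards through the premises of any rule, the children of $q$ in $T$ inherit both the $+\Sigma$ flag and $\partial$-reachability, so the recursion is well-defined and terminates either at facts of $F$ or at rules with empty antecedent.

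I would then define $>'$ by putting $r_q >' s$ for every literal $q$ appearing in $T$ and every attacker $s \in R[\non q]$. The acyclicity of $>'$ follows from the iterated use of $\partial$-reachability (Definition~\ref{def:unreach}) together with Proposition~\ref{prop:partialp_depends_on_q_partialq}: a literal and its complement cannot both appear inside $T$, so no attacker $s$ can coincide with a selected rule $r_{\non q}$; attackers therefore have no outgoing edges in $>'$ and every directed chain has length at most one.

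The final step is a routine induction on the depth of $q$ in $T$ establishing $D' \vdash +\partial q$, which instantiated at the root gives $D' \vdash +\partial p$. The base case handles $q \in F$ (where $+\Delta q$ immediately yields $+\partial q$) and rules $r_q$ with empty antecedent (where clause (2.2) of the $+\partial$ condition is vacuous and clause (2.3) is discharged uniformly by $r_q >' s$ for every $s \in R[\non q]$). In the inductive step, the antecedents of $r_q$ lie strictly closer to the leaves, so the inductive hypothesis supplies their $+\partial$, and clause (2.3) is once again dispatched by the newly added priorities. The main obstacle is the simultaneous verification of the acyclicity of $>'$ and of clause (2.1), i.e.\ $-\Delta \non q$, at every step of the induction; both are guaranteed by the standing assumptions of the section (no strict rules, consistency of $F$) combined with the structural property of $\partial$-reachability that $T$ never forces complementary conclusions.
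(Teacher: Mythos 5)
Your proposal follows essentially the same route as the paper's proof: extract a supporting structure for $p$ from $D \vdash +\Sigma p$, define $>'$ by making every rule in that structure superior to every rule for the complement of its head, and read off $D' \vdash +\partial p$. Your version is merely more explicit (tree construction, acyclicity of $>'$, induction on depth), and the point you flag as ``the main obstacle'' --- guaranteeing that the chosen support tree never contains a literal together with its complement, which does not follow automatically from $\partial$-reachability since Definition~\ref{def:unreach} only rules out \emph{necessary} complementary dependencies while your tree makes existential rule choices --- is exactly the step the paper's own proof also leaves implicit, discharging it only by the preliminary appeal to Proposition~\ref{prop:inconsitence-unreachable}.
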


\begin{proof}

Proposition~\ref{prop:inconsitence-unreachable} shows that a
$\partial$-unreachable literal is provable only when the theory is
inconsistent, which is against the hypothesis of the proposition.

Suppose that $D \vdash +\Sigma p$ for a theory $D$. Then, there is at least
one reasoning chain $C$ supporting $p$. Among all the possible superiority
relations based on $F$ and $R$, there is a superiority relation $>'$
where every rule $r:\ A \To c$ in $C$ is superior to any rule for $\non c$. Thus, theory $D^{\prime} = (F, R,
>')$ is such that $D' \vdash +\partial p$. \end{proof}


To illustrate why both conditions of Proposition~\ref{prop:+sigma+partial}
are required to guarantee that the canonical case whose outcome is $+\partial
p$ after the revision operation, let us consider a theory with the following
rules:
\setlength{\arraycolsep}{1pt}
\[
\begin{array}{ccccc}
\To_{r_1}  &  a      &            &             &     \\
\To_{r_2}  & \neg a  &            &             &     \\
           &         & a, \neg a  &  \To_{r_3}  &  p.  \\     
\end{array}
\]
In this case we have both $+\Sigma a$ and $+\Sigma\neg a$, therefore we can build a
reasoning chain to $p$, but $p$ itself is $\partial$-unreachable because
it depends on a contradiction. Thus, there is no way to change the previous theory to prove $p$.

\medskip 

\noindent We end this section proposing an example to translate a real-life
case into our logic. This will also help in giving an intuitive revision
mechanism that shows how argumentation in legal reasoning is easily mapped in
changing the superiority relation of a defeasible theory. 

\begin{example}\label{ex:Preferences}

A couple can have offspring but, since both mother and father are affected of
cystic fibrosis, they know that every their child will be most likely affected
by the same genetic anomaly. Since they want their offspring to be healthy,
they request for medically assisted reproduction techniques. Their case is
disputed first in an Italian Court where the judge has to establish which
between Art. 4 of Italian Legislative Act 40/2004\footnote{Art. 4 of Italian
Legislative Act 40/2004: ``\emph{The recourse to medically assisted
reproduction techniques is allowed only [\dots] in the cases of sterility}''.}
($r_{0}$ and $r_{1}$) and the \emph{standard common medical practice}
($r_{3}$) in force in 15 countries of the EU prevails.

The couple is indeed able to produce embryos and they cannot be considered as
sterile ($r_{2}$). This makes both Art. 4 and the \emph{standard common medical practice} to be applicable to their case. The judge
argues in favour of $r_{1}$ based on \emph{lex superior} and refuses their
request: this
principle applies since Art. 4 Act 40/2004 is a legal rule while $r_{3}$ has a
juridical validity but it is not a proper legal rule.

\begin{align*}
	F =\set{& \mathit{Embryo},\ \mathit{GeneticAnomalies}}\\
	R =\{ & r_{0}: \neg \mathit{CandidateInVitroFertilization} \To \neg \mathit{Techniques},\\
	& r_{1}: \neg \mathit{Sterility} \To \neg\mathit{CandidateInVitroFertilization},\\
	& r_{2}: \mathit{Embryo} \To \neg \mathit{Sterility},\\
	& r_{3}: \neg \mathit{Sterility}, \mathit{GeneticAnomalies} \To \mathit{CandidateInVitroFertilization},\\
	& r_{4}: \neg \mathit{Sterility}, \mathit{GeneticAnomalies} \To \neg \mathit{Healthy}\\
	& r_{5}: \mathit{GeneticAnomalies}, \mathit{CandidateInVitroFertilization} \To \mathit{Healthy}\}\\
	> =\{ & (r_{1}, r_{3})\}.		
\end{align*}
The couple appeals to the European Court for Human Rights. The Court
establishes that not permitting the medical techniques would demote the goal
of family health promoted by Article 8 of the Convention. In our example,
$r_{3}$ promotes the goal of family health ($r_{5}$), and thus we invert the
priority between $r_{1}$ and $r_{3}$ based both on \emph{lex superior} and
\emph{lex specialis} ($>' =\{ (r_{3}, r_{1})\}$). In here, the \emph{lex
superior} principle applies because $r_{3}$ is an european directive, while
the \emph{lex specialis} principle applies because $r_{3}$ covers a more
specific case than $r_{0}$. 
\end{example}

\section{Changing defeasible preferences}\label{sec:PrefDefRevisionPRDL}

We now analyse the processes of revision in a defeasible theory, when no
changes to rules and facts are allowed. Henceforth, when no confusion
arises, every time we speak about a (revision) transformation we refer to a
(revision) transformation acting only on the superiority relation.

A good starting point for our investigation is to focus on the corresponding
decision problem (i.e., answering the question \emph{if} it is possible to
modify the state of a literal in a defeasible theory by changing the relative
strength of rules) and characterise it in a formal way (so as to be able also
to answer the question \emph{when}). In particular, in
Subsection~\ref{subsec:np_completeness} we show that the decision problem at
hand is computationally hard in general, while in the remaining of the section
we partition the decision problem in three sub-cases which correspond to the
possible ways in which we can modify the (provability) state of a literal.

\subsection{NP-Completeness} 
\label{subsec:np_completeness}


First, we introduce some additional terminology. Definition~\ref{def:basedOne}
constructs a defeasible theory starting from a fixed set of rules and an empty
set of facts. This formulation limits the revision problem to preference
changes, notwithstanding the particular instance of the superiority relation.

\begin{defn}\label{def:basedOne}
	Given a set of defeasible rules $R$, a defeasible theory $D$ is \emph{based on} $R$ iff \[D = (\emptyset, R, >).\]
\end{defn}

The aim of Definition~\ref{def:refutabilityLiteral} is to specify
the possible relationships between a literal and all theories based on a set
of rules $R$.

\begin{defn}\label{def:refutabilityLiteral}
	Given a set of defeasible rules $R$, a literal $p$ is
\begin{enumerate}
	\item $>$-$R$-\emph{tautological} iff for all theories $D$ based on $R$, $D \vdash +\partial p$.
	\item $>$-$R$-\emph{non-tautological} iff there exists a theory $D$ based on $R$ such that $D \not\vdash +\partial p$.
	\item $>$-$R$-\emph{refutable} iff there exists a theory $D$ based on $R$ such that $D \vdash -\partial p$.
	\item $>$-$R$-\emph{irrefutable} iff for all theories $D$ based on $R$, $D \not\vdash -\partial p$.	
\end{enumerate}	
\end{defn}

The notion of $>$-$R$-\emph{irrefutable} represents the negative counterpart
of $>$-$R$-\emph{tautological}; the same holds for $>$-$R$-\emph{refutable}
and $>$-$R$-\emph{non-tautological}.

If $p$ is $>$-$R$-\emph{tautological}, then, in every theory based on the set
of rules $R$, an instance of the superiority relation such that $p$ is
defeasibly refuted does not exist. Accordingly, if a literal is $>$-$R$-\emph{tautological}, then we cannot revise it.

On the contrary, if an instance of the
superiority relation where $p$ is no longer provable exists, then $p$ is
$>$-$R$-\emph{refutable}.

\medskip

\noindent To prove the NP-completeness of the problem of establishing
if it is possible to revise a theory modifying only the superiority relation,
we reduce the \emph{$3$-SAT problem} -- a known NP-complete problem -- to our
decision problem. In particular, we are going to map a 3-SAT formula to a
defeasible theory and we check wether the literal corresponding to the 3-SAT
formula is tautological. Definition~\ref{def:tranformation} exhibits the
reduction adopted.


\begin{defn}\label{def:tranformation}
	Given a 3-SAT formula $\Gamma = \bigwedge\limits_{i=1}^{n} C_{i}$ such that $C_{i} = \bigvee\limits_{j =1}^{3} a_{j}^{i}$, we define the $\Gamma$-transformation as the operation that maps $\Gamma$ into the following set of defeasible rules
	\begin{align*}
    R_{\Gamma} =\{ 
  	& r_{i j}^{a}: \ \To a_{j}^{i}\\
 	& r_{i j}: a_{j}^{i} \To c_{i}\\
	& r_{\non i}: \To \non c_{i}\\
	& r_{i}: \non c_{i} \To p\}.\\
\end{align*}
\end{defn}

\noindent The above definition clearly shows that the mapping is
polynomial in the number of literals appearing in the 3-SAT formula $\Gamma$.

The second step of the proof construction is to ensure that the proposed
mapping always allows the revision problem to give a correct answer (either
positive, or negative) for every 3-SAT formula.
Proposition~\ref{prop:decisiveAcyclic} and Lemma~\ref{lem:decisive} guarantee
that any theory obtained by $\Gamma$-transformation provides an answer.
These results are also intended to establish relationships between
the notions of tautological and refutable given in
Definition~\ref{def:refutabilityLiteral}.

\begin{defn}\label{def:decisive}
A defeasible theory $D$ is \emph{decisive} iff for every literal $p$ in $D$
either $D \vdash +\partial p$, or $D\vdash -\partial p$.
\end{defn}

\begin{prop}\label{prop:decisiveAcyclic}
Given a defeasible theory $D$, if the atom dependency graph of $D$ is acyclic, then $D$ is decisive.
\end{prop}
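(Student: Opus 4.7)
My plan is to proceed by well-founded induction on the atom dependency graph of $D$, which is well-founded precisely because it is acyclic. For each literal $p$, define its height as the length of the longest path in the graph from any source to the atom of $p$; equivalently, I can induct on the rank of $p$ in any fixed topological order. The key property I will use is that every literal $a$ occurring in $A(r)$ for some $r\in R[p]\cup R[\non p]$ has strictly smaller height than $p$, and hence by the inductive hypothesis is already decisive (either $+\partial a$ or $-\partial a$ holds).

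For the base case, if $p$ has height $0$, then either $p\in F$ (so $+\Delta p$ and thus $+\partial p$) or there is no rule for $p$ at all, in which case clause (2.2) in the condition for $-\partial$ is vacuously satisfied and $-\partial p$ holds. For the inductive step, I would carry out the following exhaustive case analysis, using the inductive hypothesis to classify every antecedent of every rule in $R[p]\cup R[\non p]$ as either $+\partial$ (applicable) or $-\partial$ (discarded). First, if no rule in $R_{sd}[p]$ is applicable, clause (2.2) of the $-\partial$ condition is met, so $-\partial p$ holds. Otherwise, fix an applicable $r\in R_{sd}[p]$, which discharges clauses (2.1)--(2.2) of $+\partial$ once we check $-\Delta\non p$ (here trivial, since there are no strict rules and we may assume $\non p\notin F$ by consistency of $F$). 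Now split on the attackers: if every $s\in R[\non q]$ is either discarded or beaten by some applicable rule in $R_{sd}[p]$, clause (2.3) of $+\partial$ is met and $+\partial p$ holds; otherwise, there exists an applicable $s\in R[\non p]$ such that no applicable $t\in R_{sd}[p]$ satisfies $t>s$, and this same $s$ witnesses clause (2.3) of the $-\partial$ condition, giving $-\partial p$.

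Since the cases are exhaustive (they simply trace the branches of the $\pm\partial$ inference conditions, which are strong negations of one another), one of $+\partial p$ or $-\partial p$ is established at each step, so by induction $D$ is decisive.

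The routine part is checking that the subclauses line up, which they do by construction of the strong-negation pairing. The main obstacle, and the only place where care is needed, is verifying that the inductive hypothesis applies to \emph{all} antecedents relevant at $p$: this requires the dependency graph to order not only antecedents of rules in $R[p]$ below $p$, but also antecedents of rules for $\non p$, which is guaranteed because the dependency graph is defined on atoms (so $p$ and $\non p$ share a node) and because acyclicity propagates to both polarities.
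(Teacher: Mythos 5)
Your proof is correct. Note, however, that the paper does not actually prove this proposition: its ``proof'' is a pointer to \cite{AntoniouBGM06}, so there is nothing in the text to compare against step by step; your well-founded induction along a topological order of the acyclic atom dependency graph is precisely the standard argument that the cited reference carries out, and it is the argument the paper implicitly relies on. Your inductive step is exhaustive exactly because the $\pm\partial$ conditions are strong negations of one another, and your closing observation --- that the dependency graph is defined on \emph{atoms}, so the antecedents of rules for $\non p$ also lie strictly below $p$ and the inductive hypothesis covers all attackers --- is the one point where genuine care is needed. One small slip: consistency of $F$ does not entitle you to assume $\non p \notin F$; it only rules out $\{p, \non p\} \subseteq F$. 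This does not damage the argument, because if $\non p \in F$ then $+\Delta \non p$ holds and clause (2.1) of the $-\partial$ condition immediately yields $-\partial p$, so $p$ is decisive in that branch as well --- but you should add this branch to your case split rather than assume it away. (In the paper's intended application, Lemma~\ref{lem:decisive}, the theories have $F=\emptyset$ and no strict rules, so the point is moot there.)
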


\begin{proof}
For a detailed definition of atom dependency graph and a
complete proof of the claim, the interested reader should refer to
\cite{AntoniouBGM06}.
\end{proof}

\begin{lem}\label{lem:decisive}
	Any defeasible theory $D$ based on $R_{\Gamma}$ of Definition~\ref{def:tranformation} (for any $\Gamma$) is decisive.
\end{lem}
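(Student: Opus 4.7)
The plan is to invoke Proposition~\ref{prop:decisiveAcyclic} and verify that the atom dependency graph associated with any theory $D$ based on $R_{\Gamma}$ is acyclic. Since $D$ has no facts and the rule set is fixed to $R_{\Gamma}$, only the structure of $R_{\Gamma}$ matters for this verification; the particular instance of the superiority relation is irrelevant for acyclicity of the atom dependency graph.

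First I would catalogue the atoms occurring in $R_{\Gamma}$, namely the $a_{j}^{i}$, the $c_{i}$, and $p$. Then I would read off the dependency edges rule by rule: the rules $r_{i j}^{a}:\ \To a_{j}^{i}$ and $r_{\non i}:\ \To \non c_{i}$ have empty antecedents and therefore contribute no outgoing edges; the rules $r_{i j}: a_{j}^{i} \To c_{i}$ make the atom $c_{i}$ depend on $a_{j}^{i}$; and the rules $r_{i}: \non c_{i} \To p$ make $p$ depend on $c_{i}$ (since dependency is defined on atoms, so $\non c_{i}$ and $c_{i}$ share the same node).

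Next I would observe that the atoms are naturally stratified into three layers — bottom layer $\{a_{j}^{i}\}$, middle layer $\{c_{i}\}$, top layer $\{p\}$ — and every edge points strictly from a higher layer to a lower one. Consequently the atom dependency graph is acyclic.

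The remaining step is a direct appeal to Proposition~\ref{prop:decisiveAcyclic}, which yields that $D$ is decisive. I do not anticipate any real obstacle here: the claim is essentially a structural observation about $R_{\Gamma}$, and the only point requiring a moment of care is to note that the superiority relation plays no role in acyclicity, so the statement holds uniformly for every theory based on $R_{\Gamma}$ regardless of the chosen $>$.
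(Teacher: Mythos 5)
Your proposal is correct and follows essentially the same route as the paper: the paper's proof is a one-line remark that the atom dependency graph of any theory based on $R_{\Gamma}$ is easily verified by case inspection to be acyclic, whence Proposition~\ref{prop:decisiveAcyclic} applies. Your version merely spells out that inspection (the stratification into the layers $\{a_{j}^{i}\}$, $\{c_{i}\}$, $\{p\}$ and the observation that the superiority relation is irrelevant to acyclicity), which is a faithful expansion of the same argument.
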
 

\begin{proof}

It is easy to verify by case inspection that the atom dependency graph is
acyclic.
\end{proof}

\noindent We are now ready to introduce the main result of
NP-completeness. First of all, we have to prove that the revision
problem is in NP. Second, we show that it is NP-hard by
exploiting the formulation of the 3-SAT problem and the transformation
proposed in Definition~\ref{def:tranformation}.

\begin{thm}\label{thm:NP-Completeness}
	The problem of determining the revision of a defeasible literal by changing the preference relation is \emph{NP-}complete. 
\end{thm}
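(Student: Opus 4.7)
The plan is to prove the two halves of NP-completeness separately.

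For NP membership, I would take the modified superiority relation $>'$ as a polynomial-size certificate. Given such a candidate, the entire extension of $(F, R, >')$ is computable in polynomial (indeed linear) time by the standard algorithms for defeasible inference, after which it is immediate to check whether the target literal carries the required proof tag. This is the easier direction and follows directly from the known complexity of defeasible inference.

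For NP-hardness, I would exploit the $\Gamma$-transformation of Definition~\ref{def:tranformation}. The transformation is polynomial in the size of $\Gamma$, and by Lemma~\ref{lem:decisive} the resulting theory is decisive under every choice of superiority relation. What remains is to establish the equivalence: $\Gamma$ is satisfiable if and only if the distinguished literal $p$ of the transformed theory is $>$-$R_\Gamma$-\emph{refutable}. For the forward direction, given a satisfying assignment $\nu$, I would build a superiority relation that encodes $\nu$ through priorities between the atom-level rules $r_{ij}^{a}$ and the rules for their complements (so that $+\partial a_j^i$ holds iff $\nu(a_j^i) = \top$), while leaving the priorities between each $r_{ij}$ and $r_{\non i}$ unset. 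Since $\nu$ satisfies every clause, each $C_i$ contains at least one literal $a_j^i$ with $+\partial a_j^i$ whose rule $r_{ij}$ is not dominated by $r_{\non i}$; this yields $-\partial \non c_i$ for every $i$, which in turn forces $-\partial p$ via discarding of every $r_i$. For the converse, any $>$ witnessing $-\partial p$ must produce $-\partial \non c_i$ for every clause, which by the proof conditions for $-\partial$ requires at least one $+\partial$-provable atom $a_j^i$ in each clause; reading off the atom-level priorities yields a satisfying assignment.

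The main obstacle is the rigorous verification of the converse direction: I would need to argue that the assignment induced by reading off atom-level priorities is consistent across multiple occurrences of the same variable and does indeed satisfy every clause of $\Gamma$. Decisiveness (via Proposition~\ref{prop:decisiveAcyclic} and Lemma~\ref{lem:decisive}) plays the essential role here, ruling out the indeterminate provability states that would otherwise undermine the extraction of a well-defined Boolean assignment from the chosen superiority relation.
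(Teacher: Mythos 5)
Your proposal is correct and follows essentially the same route as the paper: membership in NP via a guessed superiority relation checked against the linear-time computation of the extension, and hardness via the $\Gamma$-transformation together with decisiveness (Lemma~\ref{lem:decisive}) to establish that $\Gamma$ is satisfiable exactly when $p$ is $>$-$R_\Gamma$-refutable. The ``obstacle'' you flag in the converse direction is resolved just as you suspect: provability is a property of the literal rather than of its occurrence, and consistency of the theory prevents $+\partial$ holding for a literal and its complement, so the extracted assignment is well defined --- which is precisely how the paper's definition $I(a_j^i)=1$ iff $D\vdash+\partial a_j^i$ handles it.
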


\begin{proof}

The proof that the problem is in NP is straightforward. Given a defeasible
theory $D=(F,R,>)$ and a literal $p$ to be revised, an oracle guesses a
revision (in terms of a new preference relation $>'$ applied to $D$) and
checks if the state of $p$ has changed based on the extensions of $D$ and
$D'=(F,R,>')$. The complexity of this check is bound to the computation of the
extensions of $D$ and $D'$, which \cite{complexity} proves to be linear in the
order of the theory.

To prove the NP-hardness, given a 3-SAT formula
$\Gamma = \bigwedge\limits_{i=1}^{n} C_{i}$ such that $C_{i} =
\bigvee\limits_{j =1}^{3} a_{j}^{i}$, a defeasible theory $D$ based on the set
of defeasible rules $R_{\Gamma}$ obtained by $\Gamma$-transformation, and a
literal $p$ in $D$, we show that:

\begin{enumerate}[(1)]
	\item if $p$ is $>$-$R_{\Gamma}$-tautological, then $\Gamma$ is not satisfiable; 
	\item if $p$ is $>$-$R_{\Gamma}$-non-tautological, then $\Gamma$ is satisfiable.
\end{enumerate}

\noindent (1) Lemma~\ref{lem:decisive} allows us to reformulate the contrapositive using $>$-$R_{\Gamma}$-refutable. If $\Gamma$ is satisfiable, then there exists an interpretation $I$ such that 
\begin{eqnarray}
I \vDash \Gamma &\iff& I \vDash \bigwedge\limits_{i=1}^{n} C_{i} \nonumber\\
  &\iff& I \vDash C_{1} \text{ and } \dots \text{ and } I \vDash C_{n} \nonumber\\
  &\iff&  I \vDash \bigvee\limits_{j =1}^{3} a_{j}^{1}  \text{ and } \dots \text{ and } I \vDash \bigvee\limits_{j =1}^{3} a_{j}^{n}. \nonumber
\end{eqnarray}
Thus, for each $i$, there exists $j$ such that $I \vDash a_{j}^{i}$.

We build a defeasible theory $D'=(\emptyset,R_{\Gamma},>')$ as follows. If
there exists a literal $b_{k}^{l}$ such that $b_{k}^{l} = \non a_{j}^{i}$,
then $(r_{i j}^{a}, r_{l k}^{b})$ is in $>'$. It follows that, by
construction, $D'$ proves $+\partial a_{j}^{i}$. This means that every rule
$r_{i j}$ is applicable and it is not weaker than the corresponding rule
$r_{\non i}$. Hence, we have $-\partial \non c_{i}$, for all $i$.
Consequently, each rule $r_{i}$ for $p$ is discarded and we conclude
$-\partial p$. Accordingly, $p$ is $>$-$R_{\Gamma}$-refutable.

\medskip

\noindent (2) Again, by Lemma \ref{lem:decisive}, every theory based on
$R_{\Gamma}$ is decisive. Thus, $p$ is $>$-$R_{\Gamma}$-refutable.
Accordingly, there exists a theory $D'=(\emptyset, R_{\Gamma}, >')$ such that
$D' \vdash -\partial p$. Given that $R_{\Gamma}[p] \neq \emptyset$ and $R_{\Gamma}[\non p] = \emptyset$ by construction, every rule
for $p$ is discarded. Namely, we have $-\partial \non c_{i}$, for all $i$.

Each rule $r_{\non i}$ is vacuously applicable. Hence, in order to have
$-\partial \non c_{i}$, there must exist a rule $r_{i j}$ that is
applicable. Therefore, for each $i$ there is at least one $j$ such
that $+\partial a_{j}^{i}$.

We build an interpretation $I$ as follows:\footnote{We use the standard notation where $I(a) = 1$ iff $a$ is evaluated to \emph{True} in $I$, and $I(a) = 0$ otherwise.}
\begin{center}
	$I(a_{j}^{i}) = 1$ iff $D
\vdash +\partial a_{j}^{i}$.
\end{center}
Since for each $1\leq i \leq n$, we have $I(a_{j}^{i}) = 1$ for at least one
$j$, then also $I \vDash C_{i}$ for all $i$, and we conclude that $I
\vDash \Gamma$.
\end{proof}
In addition, Theorem~\ref{thm:NP-Completeness} specifies that there
are situations where it is not possible to revise a literal only using the
superiority relation. For example, if we take a 3-SAT formula which is a
tautology, the $\Gamma$-transformation generates a theory that cannot be
revised only using the superiority relation. Thus, we can formulate the following result.
\begin{cor}\label{cor:no-revision}
  There are theories and literals for which a revision modifying only the superiority relation is not possible.
\end{cor}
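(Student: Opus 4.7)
The plan is to exhibit an explicit pair consisting of a theory and a literal for which every superiority relation yields the same conclusion about that literal, so that no revision acting solely on $>$ can change the literal's provability state. The most economical route is to reuse the reduction from the proof of Theorem~\ref{thm:NP-Completeness} in the ``wrong'' direction.

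First I would pick any unsatisfiable 3-SAT instance, for example $\Gamma = (x\vee x\vee x)\wedge(\neg x\vee\neg x\vee\neg x)$, and apply the $\Gamma$-transformation of Definition~\ref{def:tranformation} to obtain the rule set $R_{\Gamma}$ together with the distinguished literal $p$ appearing as the head of the rules $r_{i}: \non c_{i} \To p$. Then, for an arbitrary theory $D=(\emptyset, R_{\Gamma}, >)$ based on $R_{\Gamma}$, Lemma~\ref{lem:decisive} guarantees that $D$ is decisive, so exactly one of $+\partial p$ or $-\partial p$ holds in $D$.

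The key step is to invoke the contrapositive of part (2) of the proof of Theorem~\ref{thm:NP-Completeness}: if $p$ were $>$-$R_{\Gamma}$-non-tautological (equivalently $>$-$R_{\Gamma}$-refutable, by decisiveness), then $\Gamma$ would be satisfiable, contradicting the choice of $\Gamma$. Hence $p$ is $>$-$R_{\Gamma}$-tautological, which means that $D'\vdash +\partial p$ for every superiority relation $>'$ based on $R_{\Gamma}$. Consequently no modification of the superiority relation alone can change the provability status of $p$, which is exactly what the corollary claims.

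There is essentially no obstacle once the theorem is available: the only care required is selecting the correct direction of the theorem and picking a genuinely unsatisfiable 3-SAT formula. The contrapositive of part (1), by contrast, would only yield the converse fact (that a satisfiable $\Gamma$ produces a revisable $p$), which is not what is needed here.
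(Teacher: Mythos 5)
Your proof is correct and takes essentially the same route as the paper: apply the $\Gamma$-transformation to a 3-SAT instance for which Theorem~\ref{thm:NP-Completeness} forces the literal $p$ to be $>$-$R_{\Gamma}$-tautological, so that no choice of superiority relation alters its provability. The only divergence is that the paper's justification speaks of a formula that is a \emph{tautology}, whereas the hypothesis actually needed --- and the one you correctly identify via the contrapositive of part (2) --- is that $\Gamma$ be \emph{unsatisfiable}; a tautologous (hence satisfiable) $\Gamma$ would instead make $p$ refutable and therefore revisable.
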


\subsection{Revision in Legal Domain}

Similarly to what we did in Section \ref{sec:Norms}, we now motivate the types
of changes we are going to study by appealing again to the legal domain. When
two lawyers dispute a case, there are four situations in which each of them
can be if she revises the superiority relation employed by the other one.

\begin{enumerate}[(a)]
\item \label{uno} The revision process supports the argument of
\emph{reasonable doubt}. Someone proves that the rules imply a given
conclusion. If the preference is revised then we can derive that this is not
the case, showing that the conclusion was not beyond reasonable doubt.
\item \label{due} The revision process beats the argument of \emph{beyond
reasonable doubt}. Analogously to situation (\ref{uno}), someone proves that
the rules do
not imply a given conclusion. If the preference is revised, then we can derive
that this is indeed the case.
\item \label{tre} The revision process supports the argument of \emph{proof of
innocence/guilt}. Someone proves that the rules imply a given conclusion. If
the preference is revised, then we can derive that the opposite holds.
\item \label{quattro} The revision process cannot support a given thesis.

\end{enumerate}

\noindent Revising a defeasible theory by changing only the priority among its
rules means studying how an hypothetic revision operator works in the three
cases reported below:
\begin{enumerate}[(1)]
\item \label{can1} how to obtain $-\partial p$, starting from $+\partial p$;
\item \label{can2} how to obtain $+\partial\non p$, starting from $+\partial p$;
\item \label{can3} how to obtain $+\partial p$, starting from $-\partial p$.
\end{enumerate}

\noindent We name these three revisions \textit{canonical}. We provide an
exhaustive analysis, based on the definitions above, in the next subsections.

Situation (\ref{uno}) corresponds to canonical case (\ref{can1}).
Situation (\ref{due}) corresponds to canonical case (\ref{can3}).
Situation (\ref{tre}) corresponds to canonical case (\ref{can2}).

Situation (\ref{quattro}) includes several contexts that are deemed as
sub-cases of the previous ones: in particular, it captures cases where indeed,
the revision process based the superiority relation is impossible, namely:

\begin{itemize}
	\item in the \emph{first} canonical case, when literal $p$ is
$>$-$R$-tautological (by Definition~\ref{def:refutabilityLiteral});
	\item in the \emph{second} canonical case, when literal $p$ is
$>$-$R$-tautological, or a reasoning chain supporting the
complementary does not exist (i.e., condition $-\Sigma \non p$ holds);
	\item in the \emph{third} canonical case, when literal $p$ is
$\partial$-unreachable (as stated in
Proposition~\ref{prop:inconsitence-unreachable}), or a
reasoning chain supporting it does not exist (i.e., condition $-\Sigma p$ holds).
\end{itemize}

Notice that literals provable with tag $+\varphi$ are special cases of
tautological literals (cf. Definition~\ref{def:refutabilityLiteral}).
As such, this kind of literals leads the revision process to be
unsuccessful for the first and the second canonical case. A possible legal scenario is when one of the parties argues in favour of a
thesis in a defeasible way and the counter-part cannot discredit it, or cannot
exhibit a proof for the opposite, independently of the changes in the
superiority relation. The next proposition formally captures the above
intuitions.


\begin{prop}\label{propFi_p_NOWAY_+Partial_Np}
Given a consistent defeasible theory $D=(F,R,>)$, if $D \vdash +\varphi p$
for a literal $p$, then there does not exist a theory $D'=(F,R,>')$ such
that (1) $D'\vdash+\partial\non p$, or (2) $D'\vdash-\partial p$.
\end{prop}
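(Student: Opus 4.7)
My plan rests on the observation that the proof conditions defining $+\varphi$, $\pm\Sigma$, and $\pm\Delta$ never mention the superiority relation $>$. Consequently, swapping $>$ for any $>'$ leaves the sets of literals provable with these tags unchanged. Concretely, a routine induction on the length of a derivation shows that for any $D'=(F,R,>')$ and any literal $q$, we have $D\vdash +\varphi q$ iff $D'\vdash +\varphi q$, and the same invariance holds for $\pm\Sigma q$ and $\pm\Delta q$; the recursive clauses of $+\varphi$ reference only $+\Delta$, $+\varphi$ itself, and $-\Sigma$, none of which invoke $>$.

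For part (2), the invariance gives $D'\vdash +\varphi p$ at once. The positive implication chain of Figure~\ref{figImplications+} then yields $D'\vdash +\partial p$, and by the Principle of Strong Negation $+\partial p$ and $-\partial p$ cannot both hold, so $D'\not\vdash -\partial p$.

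For part (1), I would split on whether $p$ is a fact. If $p\not\in F$, Proposition~\ref{propFi_p_Implies-Sigma_Np} gives $D\vdash -\Sigma \non p$; by the invariance above, $D'\vdash -\Sigma \non p$ as well. Traversing the negative implication chain of Figure~\ref{figImplications-} delivers $D'\vdash -\partial \non p$, and strong negation again excludes $D'\vdash +\partial \non p$. If instead $p\in F$, then $+\Delta p$ holds in both $D$ and $D'$; since $F$ is consistent and the paper's setting contains no strict rules, neither $+\Delta \non p$ nor $-\Delta p$ can hold in $D'$, so clauses (1) and (2.1) of the inference rule for $+\partial \non p$ both fail, giving $D'\not\vdash +\partial \non p$ directly.

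The only subtle point is the $p\in F$ boundary case, since Proposition~\ref{propFi_p_Implies-Sigma_Np} explicitly excludes it; everything else is essentially a transfer argument through the implication chains of Figure~\ref{figImplications}, made possible by the crucial fact that $+\varphi$ is, by construction, insensitive to changes of the superiority relation.
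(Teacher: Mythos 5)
Your proof is correct and follows essentially the same route as the paper's: part (1) rests on Proposition~\ref{propFi_p_Implies-Sigma_Np} together with the implication chains of Figure~\ref{figImplications} and the Principle of Strong Negation, and part (2) on the fact that the chain witnessing $+\varphi p$ survives any change of $>$ and still yields $+\partial p$. You merely make explicit two points the paper leaves implicit---the $>$-invariance of $+\varphi$, $\pm\Sigma$ and $\pm\Delta$, and the $p\in F$ boundary case excluded by Proposition~\ref{propFi_p_Implies-Sigma_Np}---which is a welcome gain in rigour rather than a different argument.
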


\begin{proof}
(1) Given any theory, to obtain a defeasible proof of a literal $q$,
there must exist at least one reasoning chain supporting $q$, i.e., $+\Sigma
q$ must hold. This is in contradiction with Proposition
\ref{propFi_p_Implies-Sigma_Np} which states that if $+\varphi
\non q$ holds, also $-\Sigma q$ does.

(2) By definition of $+\varphi p$, there exists a reasoning chain that
defeasibly proves $p$ made of elements such that there does not exist any rule
for the opposite conclusion. Thus, no attack to this chain is
possible, and condition (2.3.1) of $+\partial$ always holds for each element
of this chain (we recall that $-\Sigma l$ implies $-\partial l$ for any
literal $l$).
\end{proof}


In the rest of the section we are going to describe three types of revision of
preferences. For each case we identify the conditions under which such
revisions are possible. Therefore, all revision cases studied below will
not consider tautological literals as well as $\partial$-unreachable literals,
assuming that the underlying theories are consistent.

Given a defeasible theory $D$ a literal $p$ can be proved (i.e., $+\partial
p$) or refuted (i.e., $-\partial p$). The three canonical cases cover the
situations where: we pass from a theory proving $p$ to a theory refuting $p$
(without necessarily proving the opposite, $\non p$); we pass from a theory
refuting $p$ to a theory proving $p$; and from a theory proving $p$ to a
theory proving its opposite ($\non p$), and then consequently refuting $p$.
Notice that these three cases are the only ones meaningful involving provable
and refutable literals. In Section~\ref{sec:AGMComparisonPRDL}, we are going
to argue that our canonical cases can be understood as expansion, revision and
contraction of the AGM belief revision framework. Combinatorially, one could
consider another case, where $p$ is refuted and we want to obtain
a theory where we refute $\non p$. However, the meaning of this operation is
not clear to us, and it is partially subsumed by our third canonical case
(given that $+\partial p$ implies
$-\partial\non p$).

We are now ready to go onto the systematic analysis of the combinations
arising from the above defined model. We list the cases by tagging each
macroscopic case by the name \emph{Canonical case} and the combinations
depending upon the analytical schema introduced above by the name
\emph{Instance}. The instances show the combination of proof tags where a
canonical revision is possible, as well as how to operate on the theory to
perform the revision. Where necessary, a general reasoning chain supporting a
literal $p$ will be denoted as $P_{p}$.

\subsection{First canonical case: from $+\partial p$ to $-\partial p$}\label{subsecPartial_p_TO_-Partial_p}

For the sake of clarity, Figure~\ref{figRevTree} gives a tree-based
graphical representation of all analysed instances: for example, the leftmost
leaf (labeled as $+\sigma\non p$) represents the instance where
conditions are $+\sigma\non p$, $+\omega\non p$, $-\varphi p$ and $+\Sigma\non
p$. The scheme will be reprised also in the two remaining canonical cases,
with the appropriate graphical modifications for the particular case.

\begin{figure}[htp]
\centering
\begin{tikzpicture}[scale=.7, font=\small]
  \node[shape=rectangle, text width=3cm, text centered] (root) at (0,0)  {From $+\partial p$ to $-\partial p$ ($-\varphi p \wedge +\Sigma\non p$)};
  \node (+omega) at (-3,-1.5) {$+\omega\non p$};
  \node (+s+o) at (-4.5,-3) {$+\sigma\non p$};
  \node (-s+o) at (-1.5,-3) {$-\sigma\non p$};
  \node (-omega) at (3,-1.5) {$-\omega\non p$};
  \node (+s-o) at (1.5,-3) {$+\sigma\non p$};
  \node (-s-o) at (4.5,-3) {$-\sigma\non p$};
 \begin{scope}[>=stealth,thick,shorten >=2pt,shorten <=2pt]
  \draw (root) -- (+omega);
  \draw (+omega) -- (+s+o);
  \draw (+omega) -- (-s+o);
  \draw (root) -- (-omega);
  \draw (-omega) -- (+s-o);
  \draw (-omega) -- (-s-o);
 \end{scope}
\end{tikzpicture}
\caption{From $+\partial p$ to $-\partial p$: revision cases.}
\label{figRevTree}
\end{figure}
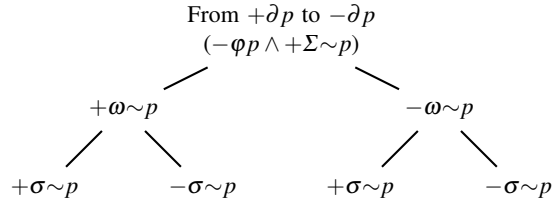

\noindent \textbf{Instance $-\Sigma\non p \wedge +\partial p$:} this case is
not reported in Figure \ref{figRevTree}, but nonetheless it represents a case
worth considering. This means there is no supporting chains
for $\non p$, so we cannot operate on them. Given $-\varphi p$, there
exists at least one of its premises that could be defeated by a rule leading
to the opposite conclusion. Thus, in order to obtain $-\partial p$, we have to
revise the theory allowing at least one of such rules to be able to fire (to
defeat, or at least to have the same power of a rule which actually proves one
of the antecedents in the chain supporting $p$).

\medskip

\noindent \textbf{Instance $+\omega\non p \wedge +\sigma\non p$:} as stated in
Proposition \ref{propPartial_p_AND_Omega_Np_implies_-sigma_Np}, this branch
represents an impossible case for any consistent defeasible theory.

\medskip

\noindent \textbf{Instance $+\omega\non p \wedge -\sigma\non p$:} by the
straightforward implication of Proposition
\ref{propPartial_p_AND_Omega_Np_implies_-sigma_Np}, the chain supporting $\non
p$ fails on the last proof step, defeated by priorities for rules which
defeasibly prove $p$. Thus, we have only to erase these priorities.

\medskip

\noindent \textbf{Instance $-\omega\non p \wedge +\sigma\non p$:} since a chain $P_{\non p}$ exists, and is never defeated (condition $-\omega\non p$
only illustrates that such a chain fails before the last proof step), a
revision process does not have to operate on a chain supporting $p$. We have
to strengthen $P_{\non p}$ by changing many priorities in order to let a rule
in $P_{\non p}$ obtain at least the same strength of such a rule in $P_p$. In
this process, we do not remove any priority among elements in $P_p$, but
only add priorities to let a rule in $P_{\non p}$ win.

\medskip

\noindent \textbf{Instance $-\omega\non p \wedge -\sigma\non p$:} the
reasoning chain $P_{\non p}$ supporting $\non p$ is defeated, but not
necessarily by a chain $P_p$ proving $p$. The case is analogous to the
aforementioned instance, but (1) we probably have to act not only on $P_{\non
p}$, but also on $P_p$, and (2) not only introduce priorities, but erase or invert them. This case represents the most generic
situation, where less information is given: a revision is possible, but we do
not know \emph{a priori} where to change the theory.

\subsection{Second canonical case: from $+\partial p$ to $+\partial\non p$}


By following the cases depicted in Figure \ref{figRevTree}, we explain
how a revision operator should work by changing the root label to ``$+\partial p$ to $+\partial\non p$'' and starting from the same premises
($-\varphi p \wedge +\Sigma\non p$). Once more, our revision tree does not
take into account tags $\pm\varphi$ for the same reasons explained at the
beginning of Section \ref{sec:PrefDefRevisionPRDL}.

\medskip

\noindent \textbf{Instance $+\omega\non p \wedge +\sigma\non p$:}
as stated in Proposition
\ref{propPartial_p_AND_Omega_Np_implies_-sigma_Np} this branch
represents an impossible case for any consistent defeasible
theory.

\medskip

\noindent \textbf{Instance $+\omega\non p \wedge -\sigma\non p$:} Proposition
\ref{propPartial_p_AND_Omega_Np_implies_-sigma_Np} states that the chain
supporting $\non p$ is defeated on the last proof step. This, combined with
$-\sigma\non p$, implies that the last step is defeated by a priority for the
rule which defeasibly proves $p$. In fact, there may exist more than one
defeated chain for $\non p$ on the last step, as well as more than one chain
which proves $p$. We propose two different approaches. We name $P$ the set of
chains defeasibly proving $p$, $P_{ls} \subseteq P$ the set of chains that
defeasibly prove $p$ for which there is a priority that applies at the
last proof step (against a chain that proves $\non p$), and $N$ the set of
chains for which the premises hold:
\begin{enumerate}
\item We choose a chain in $N$. We invert the priority for every chain in $P_{ls}$ that wins at the last proof step. 
We introduce a new priority for making it win against any remaining chain in $P$.
\item In this approach we have two neatly distinguished cases:
\begin{enumerate}
\item $|P_{ls}| > |N|$: for every chain in $N$ we invert the priorities on
the last proof step. For every remaining chain in $P$, we add a priority
between the defeasible rule used in the last proof step of a chain in $N$ and
the rule used in the last proof step of a chain in $P$ (possibly different for
each chain in $N$) such that the chain in $P$ is defeated.
\item $|N| > |P_{ls}|$: we choose a number $|P_{ls}|$ of chains in $N$ and
invert the priority on the step where they are defeated. If at the end of
this step there are still chains in $P$ that defeasibly prove $p$, we go on
with the method used for the case 2(a), focusing on the subset of chains in
$N$ modified during the first step.
\end{enumerate}
\end{enumerate}


These two approaches rely on different underlying ideas. In the first case we
want a unique winning chain. This makes the revision procedure faster than the
second method, since we do not have to choose a different chain to manipulate
every time. Moreover, the number of changes made with the first approach is
equal to that of the second one in the worst case scenario (in general, it
revises the theory with the minimum number of changes).

The strength of the second method relies on the concept of \emph{team
defeater}: we do not give power to a single element, but to a team of rules.
In the first method the entire revision process must be repeated once the
winning chain is defeated, while in the second method if one of the winning
chains is defeated, we have only to apply the revision process on it, and not
on all the other winning chains.

Let us consider the following example:
\setlength{\arraycolsep}{1pt}
\[
\begin{array}{ccccc}
\To_{r_1} &    p   & \quad &\To_{r_2}&  p\\
\vee      &        & \quad &  \vee   &\\
\To_{r_3} & \neg p & \quad &\To_{r_4}& \neg p
\end{array}
\]

If the chain for $\neg p$ with rule $r_4$ is chosen as the winning chain, the
first approach would give $\{r_1 > r_3, r_4 > r_1, r_4 > r_2\}$ as an output,
erasing one priority and introducing two, while the second approach would
generate the following priority set: $\{r_3 > r_1, r_4 > r_2\}$, erasing
two priorities, and introducing two. If a new rule $r_i$ defeats $r_4$, it
is easy to see that in the first case we have to entirely revise the theory
(for example, let $r_3$ win against $r_1$ and $r_2$), while in the second case
we have only to introduce $r_3 > r_2$.

\medskip

\noindent \textbf{Instance $-\omega\non p \wedge +\sigma\non p$:} there exists
an undefeated chain supporting $\non p$. To revise the
theory, we have to choose one of them and, starting from $\non p$,
go back in the chain to the ambiguity point (where $P(i) = +\partial p_i
\wedge P(i+1)=-\partial p_{i+1}$ holds), and strengthen the chain by adding a
priority where a rule leading to an antecedent in the chain for $\non p$
and a rule for the opposite have the same strength.

\medskip

\noindent \textbf{Instance $-\omega\non p \wedge -\sigma\non p$:} every chain
supporting $\non p$ is defeated at least one time. A plausible solution could be to go back in the chain searching for the point where
$P(i) = +\sigma p_i$ and $P(i+1)=-\sigma p_{i+1}$. But this is not
enough to guarantee the chain to win. Let us consider the following example.

\begin{example}
Let $D$ be a theory having the following rules:
\setlength{\arraycolsep}{1pt}
\[
\begin{array}{cccccccc}
\stackrel{+\partial/-\partial}{\phantom{.}} &  &  &  & \stackrel{+\sigma/-\sigma}{\phantom{.}} &  &  & \\ 
\To_{r_1} &    a   & \To_{r_2} &  b & \To_{r_3} & c & \To_{r_4} & p\\
 & & & & \wedge & & & \\
\To_{r_5} &   \neg a   &  &   & \To_{r_6} & \neg c &  &
\end{array}
\]
It is easy to see that the sole condition of $r_3$ winning over $r_6$ is not sufficient: we
have also to introduce a priority between $r_1$ and $r_5$. Thus, we have
to act exactly as in the aforementioned case, with the solely difference that every
time a rule in the chain supporting $\non p$ is defeated, the priority rule
has to be inverted.
\end{example}

\subsection{Third canonical case: from $-\partial p$ to $+\partial p$}



For a proper analysis of this case, condition $-\partial\non p$
must hold since otherwise the case is analogous of the previous revision
from $+\partial q$ to $+\partial\non q$. Also, we do not take into
consideration the case where $-\Sigma p$ holds: if there are no chains
leading to $p$, then no revision to obtain $+\partial p$ is possible. The cases studied in this subsection
are reported in Figure \ref{figRevTreeToObtain_+Partial_p}.

\begin{figure}[htp]
\centering
\begin{tikzpicture}[scale=.7, font=\small]
  \node[shape=rectangle, text width=3cm, text centered] (root) at (0,0) {From $-\partial p \text{ to } +\partial p$
  ($-\partial\non p \wedge +\Sigma p$)};
  \node (+omega) at (-3,-1.5) {$+\omega p \wedge +\sigma p$};
  \node[shape=rectangle, text width=1.2cm, text centered] (-omega) at
  (3,-1.8) {$-\omega p$ ($-\omega\non p$)};
  \node (+s-o) at (1.5,-3.5) {$+\sigma p$};
  \node (-s-o) at (4.5,-3.5) {$-\sigma p$};
 \begin{scope}[>=stealth,thick,shorten >=2pt,shorten <=2pt]
  \draw (root) -- (+omega);
  \draw (root) -- (-omega);
  \draw (-omega) -- (+s-o);
  \draw (-omega) -- (-s-o);
 \end{scope}
\end{tikzpicture}
\caption{From $-\partial p$ to $+\partial p$: revision
cases.} \label{figRevTreeToObtain_+Partial_p}
\end{figure}
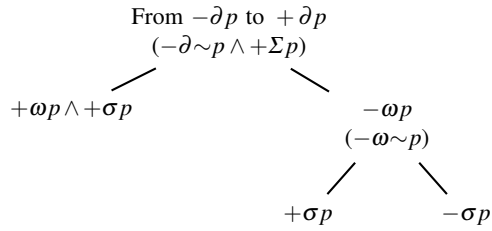
Notice that $+\omega p$ and $-\sigma p$ cannot hold at the same time: if all
the premises for $p$ are proved, then the chain fails on the last step, i.e.,
it has to be defeated by a firing rule for $\non p$. This would defeasibly
prove $\non p$, but this cannot happen since we have stated that
$-\partial\non p$ holds. Furthermore, $-\omega p$ implies that $-\omega\non p$
holds as well, since if it is not the case, we would have either $+\omega p$,
or $+\partial\non p$, which are both against the hypothesis.

\medskip

\noindent \textbf{Instance $+\omega p \wedge +\sigma p$:} we choose one of the
chains where condition $+\omega p \wedge +\sigma p$ holds, and introduce
as many priorities as the number of chains where $+\omega\non p$ holds.

\medskip

\noindent \textbf{Instance $-\omega p \wedge +\sigma p$:} this case is
analogous to instance $-\omega\non p \wedge +\sigma\non p$ of canonical case
from $+\partial p$ to $+\partial\non p$.

\medskip

\noindent \textbf{Instance $-\omega p \wedge -\sigma p$:} this case is
analogous to instance $-\omega\non p \wedge -\sigma\non p$ of canonical case
from $+\partial p$ to $+\partial\non p$.\\

\noindent We remark that conditions $\pm\sigma\non p$ are not useful 
for the revision process, since they do not give information whether the
changes will affect chains for $\non p$, or not. Example~\ref{ex:3can1} shows
that, even if $+\sigma\neg p$ holds, two distinct revisions exist: the first
involves the chain for $\neg p$ (introducing $r_1>r_3$),
the second does not (introducing $r_5>r_6$).

\begin{example}\label{ex:3can1}
Let $D$ be a theory having the following rules:
\[
\begin{array}{cccc}
\To_{r_1} &  a  & \To_{r_2} &  p\\
\To_{r_3} & \neg a  & \To_{r_4} & \neg p\\
 &  &  & \\
\To_{r_5} & b  & \To_{r_6} & p\\
\To_{r_6} & \neg b & &
\end{array}
\]
\end{example}

An analogous situation is proposed for $-\sigma\non p$ in Example~\ref{ex:3can1.1}.

\begin{example}\label{ex:3can1.1}
Let $D$ be a theory having the following rules:
\[
\begin{array}{cccccc}
\To_{r_1} &  a  & \To_{r_2} &  p\\
\To_{r_3} & \neg a  & \To_{r_4} & b & \To_{r_5} & \neg p\\
 &  & \wedge & & & \\
 &  & \To_{r_6} & \neg b & & \\
\To_{r_7} & c & \To_{r_8} & p & & \\
\To_{r_9} & \neg c &  & & &
\end{array}
\]
In here, two revisions exist: one introducing $r_1>r_3$, and the other one which introduces $r_7>r_9$.
\end{example}

\medskip

\noindent Notice that in all the canonical cases, the revision mechanism
guarantees that no cycle can be introduced. We can formulate this result,
which is a straightforward consequence of the case analysis presented here.

\begin{thm}\label{closure}
Revising a superiority relation generates a superiority relation.
\end{thm}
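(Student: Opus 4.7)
The plan is to argue by case analysis over the revision instances described in Sections~\ref{subsecPartial_p_TO_-Partial_p}--4.5, showing that none of the atomic edits they perform on $>$ can violate either of the two defining conditions of a superiority relation, namely being a binary relation on $R$ and being acyclic. Since no revision creates or deletes rules, the first condition is automatic, and the entire work is in establishing acyclicity.

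First, I would catalogue the atomic operations used across all instances: (i) \emph{deletion} of an existing pair $(r,s)\in {>}$, (ii) \emph{insertion} of a fresh pair $(r,s)$, and (iii) \emph{inversion}, i.e., replacement of $(r,s)$ by $(s,r)$. A direct inspection of the first, second, and third canonical cases shows that every such pair $(r,s)$ modified by the revision satisfies $C(r)=\non C(s)$; this is unsurprising since $>$ is consulted in the proof conditions for $\pm\partial$ only between rules with complementary heads, and the purpose of each edit is to flip the outcome of exactly such a conflict.

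Next, I would discharge the three kinds of edit separately. Deletions trivially preserve acyclicity. For insertions and inversions, I would note that within a single revision, the pairs added or redirected are oriented consistently with the target of the revision: in the first canonical case they weaken rules supporting $p$ in favour of rules for $\non p$; in the second, they strengthen a chosen chain (or team of chains) for $\non p$ against every competing chain for $p$; in the third, they strengthen a chosen chain for $p$ against every competing chain for $\non p$. Consequently, any candidate cycle $r_{1} > r_{2} > \cdots > r_{k} > r_{1}$ built purely from edited pairs would force $C(r_{i+1}) = \non C(r_{i})$ for every $i$, so that $C(r_{1})$ and $C(r_{k})$ would be complementary; but the cycle also requires $r_{k} > r_{1}$ to be a pair of rules with complementary heads, which forces $C(r_{1})=C(r_{k})$, a contradiction. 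Cycles mixing newly added pairs with pre-existing ones are excluded because the pre-existing relation was acyclic by hypothesis, and the edited pairs are local to the rules chosen in the case analysis and are installed together with the deletion of any conflicting pre-existing pair, so no fresh 2-cycle $r>s, s>r$ can arise.

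The main obstacle I anticipate is the instance $+\omega\non p \wedge -\sigma\non p$ of the second canonical case, where the \textbf{team defeater} strategy installs several new priorities simultaneously along different chains, and the instance $-\omega\non p \wedge -\sigma\non p$, where inversions propagate backwards along a chain and thus interact with priorities possibly fixed earlier in the same revision. For these, I would explicitly check that all simultaneously installed pairs point from the chosen winning side toward the losing side, so that the argument in the previous paragraph continues to apply and no backward arrow is ever added between two rules that were already ordered in the opposite direction without first deleting the old pair via inversion.
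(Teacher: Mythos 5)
The paper itself gives no explicit proof: Theorem~\ref{closure} is asserted as ``a straightforward consequence of the case analysis'' (with Proposition~\ref{prop:contrNOCylcles} already covering the deletion-only situation), so your plan of walking through the instances and checking that insertions and inversions cannot create a cycle is exactly the route the paper intends. However, two of your steps do not hold up. The parity argument is wrong: from $C(r_{i+1}) = \non C(r_i)$ along a candidate cycle $r_1 > r_2 > \cdots > r_k > r_1$ you only get that heads alternate, so the closing edge $r_k > r_1$ joining complementary-headed rules produces a contradiction only when $k$ is odd. An even-length cycle --- in particular the 2-cycle $r > s$, $s > r$ --- is entirely consistent with every edge linking complementary heads; your claim that complementarity of the closing edge ``forces $C(r_1) = C(r_k)$'' misstates what complementarity gives (it gives $C(r_k) = \non C(r_1)$, which agrees with the chain when $k$ is even). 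The argument that actually carries the weight is the orientation one you add afterwards: within one revision, every inserted or inverted pair for the literal pair $\{q, \non q\}$ points from the designated winning side to the losing side, each edited pair lives inside a single literal pair, and a cycle of edited pairs confined to one literal pair would need edges in both directions across that partition, which the uniform orientation forbids.

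Your treatment of mixed cycles is also incomplete. Acyclicity of the old relation plus the absence of fresh 2-cycles does not give acyclicity of the union: a single new edge $r_1 > s_1$ can close a longer cycle through pre-existing edges $s_1 > r_2$, $r_2 > s_2$, $s_2 > r_1$, each of which joins complementary-headed rules and which together were acyclic before the edit. Nothing in the instance descriptions explicitly guarantees that a rule being promoted is not already transitively dominated by the rule it is being promoted over, so to close the proof you would need either to show that the proof-tag preconditions of each instance (e.g., $+\sigma$, $-\sigma$) exclude such pre-existing paths back into the promoted rule, or to strengthen the procedures so that they delete or invert every old pair on such a path --- which is more than ``deleting any conflicting pre-existing pair''. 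This is the one point where your argument, like the paper's bare assertion, leaves a genuine gap.
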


\section{AGM postulates analysis}\label{sec:AGMComparisonPRDL}

The aim of this section is to study the canonical cases described in
Section~\ref{sec:PrefDefRevisionPRDL} from the point of view of the AGM
approach. In particular, we try to relate the canonical cases with the types
of theory change analysed by Alchourr{\'o}n, G{\"a}rdenfors and Makinson in
their seminal work \cite{AGM} as far as possible. Afterwards, we focus on
understanding the meaning of the various AGM postulates in terms of the
changes we proposed. This allows us then to identify which of the AGM
postulates are satisfied by our canonical cases.

This research issue is motivated, as introduced in
Section~\ref{sec:Introduction}, by the fact that the AGM postulates analysis
in non-monotonic formalisms is still controversial, and thus open to
discussion.

We recall that Delgrande proposed an approach to belief revision of logic
programs under answer set semantics that is fully compliant with the base AGM
postulates for revision \cite{DelgrandeSTW08}. He also claims in a later work
\cite[p. 568]{Delgrande10} that the third and fourth postulates for
belief revision are not appropriate for belief revision of non-monotonic
theories, and thus are ignored in his work. However, we are going to argue
that these two postulates can be adopted in our approach, which suggests that 
the question whether the AGM postulates are suitable for non-monotonic
reasoning is still open.



In the remainder we assume that the reader is
familiar with the terminology used in the AGM framework, in particular with
the notions of \emph{belief}, \emph{belief set}, and
\emph{theory}\footnote{Notice that in our framework the hypothesis of
\emph{completeness} of a theory does not hold in general, as it could be the
case that in a defeasible theory neither $+\partial p$, nor $+\partial \non p$
is derivable.}. 

To adjust the AGM framework in the perspective of preference revision, we
first rephrase the concept of extension into that of \emph{belief set}
corresponding to a defeasible theory.

\begin{defn}\label{def:beliefset}
Let $D = (F, R, >)$ be a defeasible theory. Then
\begin{displaymath}
BS(D) = BS^{+\partial}(D) \cup BS^{-\partial}(D)
\end{displaymath}
is the \emph{belief set} of $D$, where
\begin{eqnarray}
BS^{+\partial}(D) & = & \{p \, | \, p\textrm{ is a literal appearing in }D\textrm{ and }D\vdash +\partial p\}, \nonumber\\
BS^{-\partial}(D) & = & \{p \, | \, p\textrm{ is a literal appearing in }D\textrm{ and }D\vdash -\partial p\}. \nonumber
\end{eqnarray}
\end{defn}
We also state that when a literal $p$ is believed, $p \in K$ in AGM
notation, then $p \in BS^{+\partial}(D)$. Conversely, if a literal is not
believed, i.e., $p \notin K$, then $p \in BS^{-\partial}(D)$\footnote{Notice
that it is possible that a literal $p$ and its complement do not belong to
$BS(D)$. For example, consider the theory consisting only of $p \To p$ and
$\neg p \To \neg p$. In this theory none of $\pm\partial p$ and $\pm\partial
\neg p$ is provable.}. Intuitively, the idea is that if we prove $+\partial
p$ then we believe in $p$, and if we prove $-\partial p$ then we do not
believe in $p$.

\vspace{5mm}

\noindent In the remainder of the section, we relate the AGM operators of
\emph{contraction}, \emph{expansion}, and \emph{revision}, and then
reframe the corresponding postulates of AGM in the terminology of defeasible
theories (in Subsections \ref{subsec:AGMContr}--\ref{subsec:AGMRev}).

\vspace{2mm}

\emph{Belief contraction} is the process of rationally removing from a belief set $K$
a certain belief $\psi$ previously in the set. From the point of view of
Defeasible Logic, by Definition \ref{def:beliefset}, a defeasible theory $D = (F, R, >)$
where $D\vdash +\partial p$ (i.e., $p \in BS^{+\partial}(D)$) is modified such
that $-\partial p$ holds in the contracted theory (denoted by $D^-_p$) after
the process (i.e., $p \in BS^{-\partial}(D^-_p)$). For the above reasoning, it
seems reasonable to argue that the process of belief contraction as formalised
in AGM approach corresponds to our first canonical case, i.e., from $+\partial
p$ to $-\partial p$. If we consider a set of literals $C = \{p_1, \ldots,
p_n\}$, we define the contracted theory $D^-_C$ as the theory where for each
$p_i \in C$, $p_i \in BS^{-\partial}(D^-_C)$.


\emph{Belief revision} is the process of rationally deleting a certain belief
$\psi$ from a belief set $K$ and adding its opposite. From the point of
view of Defeasible Logic, by Definition \ref{def:beliefset}, a defeasible theory $D=(F, R,
>)$ where $D \vdash +\partial \non p$ (i.e., $\non p \in BS^{+\partial}(D)$
and $p \in BS^{-\partial}(D)$) is modified such that $+\partial p$ holds in
the revised theory (denoted by $D^*_p$) after the process (i.e., $p \in
BS^{+\partial}(D^*_p)$). Remember that in Defeasible Logic $\non p$ now belongs to
$BS^{-\partial}(D^*_p)$. For the above reasoning, it seems reasonable to argue
that the process of belief revision as formalised in AGM approach corresponds
to our second canonical case, i.e., from $+\partial \non p$ to $+\partial p$.


\emph{Belief expansion} is the process of adding a certain belief $\psi$ to a
belief set $K$. It is possible to consider two interpretations of the
expansion process: the first where we simply force the belief in, the second
where a belief is added if the opposite is not believed. Our third canonical
case, i.e., from $-\partial p$ to $+\partial p$, follows the second strategy.
Therefore, from the point of view of Defeasible Logic, by Definition \ref{def:beliefset}
this process describes the case of an initial defeasible theory $D = (F, R,
>)$ where $D \vdash -\partial p$ and $D \vdash -\partial \non p$ (i.e., $p,
\non p \in BS^{-\partial}(D)$) hold being modified such that $+\partial p$
holds in the expanded theory (denoted by $D^+_p$) after the process (i.e., $p
\in BS^{+\partial}(D^+_p)$). Remember that in Defeasible Logic $\non p$ still belongs to
$BS^{-\partial}(D^+_p)$. If we consider a set of literals $C = \{p_1, \ldots,
p_n\}$, we define the expanded theory $D^+_C$ as the theory where for each
$p_i \in C$, $p_i \in BS^{+\partial}(D^+_C)$.

\subsection{Preference contraction}\label{subsec:AGMContr}

Throughout this subsection, we assume that $D \vdash +\partial p$ for a
literal $p$ in $D$.

\vspace{5mm}

\noindent The first postulate in AGM belief contraction states that when a
belief set is contracted by a sentence $p$, the outcome should be logically
closed. In Defeasible Logic, we distinguish between a theory (i.e., a set of rules), and its
extension (i.e., its set of conclusions). In general, given an extension in
Defeasible Logic, there are multiple (possibly not equivalent) theories that generate the
extension. This means that in AGM there is no difference to contract a theory
or its base, while it is not the case in Defeasible Logic.
\begin{flalign}
	\tag{$K \dot{-} 1$} D^-_p \text{ is a theory.} &&
\end{flalign}
As preference contraction acts only on the superiority relation, to ensure
that a contraction operation satisfies the postulate, we only have to check if
the operation itself does not create a cycle in the superiority relation. This is guaranteed by the following proposition.

\begin{prop}\label{prop:contrNOCylcles}
	Given a defeasible theory $D=(F,R,>)$, if $D'=(F,R,>')$ is obtained from $D$ by erasing preference tuples from $>$, then $>'$ is acyclic.
\end{prop}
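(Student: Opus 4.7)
The plan is to observe that the claim follows immediately from two facts: (i) the original superiority relation $>$ is acyclic by the very definition of a defeasible theory given in Section~\ref{sec:DefLogPRDL}, and (ii) erasing tuples can only yield a subset of $>$, and acyclicity is inherited by subsets.

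More concretely, I would argue as follows. Let $>'$ be obtained from $>$ by removing some (possibly empty) collection of preference tuples. Then by construction $>' \subseteq >$. Suppose for contradiction that $>'$ contains a cycle, i.e., there exist rules $r_1, \dots, r_k \in R$ with $r_1 >' r_2 >' \cdots >' r_k >' r_1$. Since each tuple $(r_i, r_{i+1})$ (with indices modulo $k$) also lies in $>$, the same sequence is a cycle in $>$, contradicting the acyclicity of $>$ required by the definition of defeasible theory. Hence $>'$ is acyclic.

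There is really no obstacle here: the entire content of the proposition is that the property ``acyclic'' is preserved under taking sub-relations, which is a well-known elementary fact about binary relations. The only thing worth emphasising in the write-up is that ``erasing preference tuples'' strictly yields a subset, so one does not need to reprove acyclicity from scratch or rely on any property specific to defeasible reasoning. This proposition will in turn support postulate $(K \dot{-} 1)$ for the contraction operator, since the sole way a set-theoretic modification of $>$ could fail to be a legal superiority relation (and thus produce a defeasible theory) is by introducing a cycle.
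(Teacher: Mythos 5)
Your proposal is correct and follows essentially the same route as the paper's own proof: both argue by contradiction that any cycle in $>'$ would already be a cycle in $>$ (since $>' \subseteq >$), contradicting the acyclicity required of the superiority relation in a defeasible theory. No further comment is needed.
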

\begin{proof}
	By contradiction, let us suppose that there is a cycle in $>'$. Since, by hypothesis, $>'$ is obtained from $>$ by simply removing preference tuples, then each element of $>'$ is an element of $>$ and the cycle in $>'$ is also in $>$, against the hypothesis. 
\end{proof}

\noindent The idea of the second AGM postulate for belief contraction is that
a contraction removes beliefs, thus a contraction operation always produces a
belief set smaller than the original. AGM focuses only on ``positive''
beliefs. However, in our framework we have two possible types of defeasible
conclusions (as it turns out also by Definition \ref{def:beliefset}), thus we
have to check the relationships between the elements of the defeasible belief
sets before and after the operation. In particular, since we remove a belief,
then the set of formulae believed should be smaller after the contraction;
this also means that the set of formulae we do not believe is increased by the
formula we contract. As a consequence, the second postulate must be rewritten
taking into account the two parts.
\begin{flalign}
	\tag{$K \dot{-} 2$} BS^{+\partial}(D^-_p) \subseteq
	BS^{+\partial}(D) \text{ and } BS^{-\partial}(D^-_p)\supseteq BS^{-\partial}(D). &&
\end{flalign}
This postulate cannot be adopted in our framework because it contradicts the
sceptical non-monotonic nature of Defeasible Logic. To see this, suppose that we know $a$
and we have the rules $\To p$ and $a \To \neg p$. Then $a$ is
sceptically
provable, and $p$ is not. But if we decide to contract $a$, then $p$ becomes
defeasibly provable, thus we have $p\in BS^{-\partial}(D)$ but $p\in
BS^{+\partial}(D^-_p)$.\footnote{In general, making a literal $p$ no longer
defeasibly provable does not imply that $-\partial p$ holds after the revision
process. For example, consider the theory ${} \To_{r} p$ and ${} \neg p\To_{s}
\neg p$. The only way to prevent $+\partial p$ is to impose $s>r$, but in the
resulting theory none of $+\partial p$ and $-\partial p$ holds (same for $\neg
p$). Notice that in this case the conditions for our canonical cases to
succeed do not hold.}
Notice that this behaviour is not confined
to the specifics of Defeasible Logic, but holds in any sceptical non-monotonic formalism.

\vspace{5mm}

\noindent The third postulate of AGM considers the case when a belief
$\psi$ is not in the initial belief set. As we have already discussed, AGM
focuses on a classical notion of consequence relation, thus if $\psi$ is
not a consequence of the theory, then there is no reason to change anything at
all. In Defeasible Logic, this corresponds to not being able to prove $p$.
Accordingly, we can state that $p \in BS^{-\partial}(D)$.
\begin{flalign}
	\tag{$K \dot{-} 3$} \text{If } p \in BS^{-\partial}(D)
	\text{ then } BS(D^-_p) = BS(D). &&
\end{flalign}
Since we want to obtain a theory where $-\partial p$ holds and by hypothesis
$p \in BS^{-\partial}(D)$, then the postulate trivially holds.

\vspace{5mm}

\noindent The fourth AGM postulate states that the only literals that are
immutable in the contraction process are tautologies. Defeasible Logic does not have logical
connectives, thus it is not possible to have tautologies in the classical
sense. Nevertheless, the concept of tautology is that of a statement that
cannot ever be refuted, i.e., it is true in every interpretation. In classical
logic, an interpretation is an assignment of truth values to the propositional
atoms, while in Defeasible Logic corresponds to consider a particular set of propositional
atoms as factual knowledge. In the context of this paper, where we assume that
the set of facts cannot be changed, the closest thing to an interpretation is
an assignment of the superiority relation. We give the formulation of the
success postulate for contraction using the contrapositive.
\begin{flalign}
	\tag{$K \dot{-} 4$} \text{If } p \in BS(D^-_p) \text{ then } D \vdash +\Delta p. &&
\end{flalign}

\noindent The concept of strict derivation embodied by $+\Delta$ cannot fully
capture the notion of tautology as a non-refutable statement, since the proof
tag $+\varphi$ indeed denotes the presence of a supporting chain made of
elements for which there are no rules for the opposite, and so \emph{de facto}
a non-refutable argument obtained from defeasible rules.

Thus, it seems reasonable to reformulate the success postulate for
contraction as follows.
\begin{flalign}
	\tag{$K \dot{-} 4'$} \text{If } p \in BS(D^-_p) \text{ then } D \vdash +\varphi p. &&
\end{flalign}

\noindent Even this version of the postulate does not hold in Defeasible Logic. Indeed,
there exist situations where there is a proof for $p$ and it is not possible
to change the theory in order to make $p$ no longer provable, even if there
are opposite literals of some elements for every chain supporting $p$. A
simple situation is to take a tautologous 3-SAT formula and to generate its
$\Gamma$-transformation (see Definition~\ref{def:tranformation}). There are literals in the theory obtained that cannot
be contracted. However, there are more cases.

\begin{example}\label{ex:Taut}
Let $D$ be a defeasible theory with the following set of rules:
\setlength{\arraycolsep}{1pt}
\[
\begin{array}{cccccc}
\To_{r_1}  &       l  &  \To_{r_2}  &  \neg a  &             &    \\
           &          &  \To_{r_3}  &       a  &  \To_{r_4}  &  p \\
           &          &  \To_{r_5}  &       b  &  \To_{r_6}  &  p \\
\To_{r_7}  &  \neg l  &  \To_{r_8}  &  \neg b.  &             &    \\
\end{array}
\]
To contract $p$, we must block both the chains proving $p$. But, in order to
do so, we should have that $D \vdash +\partial l$ as well as $D \vdash
+\partial\neg l$. This is not possible since $D$ is consistent.
\end{example}

Unfortunately, the rule pattern shown in Example~\ref{ex:Taut} is not a
sufficient condition to reframe the postulate $(K \dot{-} 4')$. Indeed, as
Example~\ref{ex:ContrTaut} shows, it is possible to find counter-examples
where $p$ can be contracted, as well as counter-examples to counter-examples
(we refer to Example~\ref{ex:ContrContrTaut}) where, by extending the theory
of Example~\ref{ex:ContrTaut} with rules $\set{r_{19},\dots,r_{25}}$, the
contraction of $p$ becomes, again, not possible.

\begin{example}\label{ex:ContrTaut}
Let $D$ be a defeasible theory with the following set of rules:
\setlength{\arraycolsep}{1pt}
\[
\begin{array}{cccccc}
           &          &  \To_{r_1}     &       a  &  \To_{r_2}   &  p \\
           &          &  \To_{r_3}     &       b  &  \To_{r_4}   &  p \\
           &          &  \To_{r_{5}}   &       c  &  \To_{r_{6}} &  p \\

\To_{r_7}  &       l  &  \To_{r_8}     &  \neg a  &   &  \\
\To_{r_9}  &  \neg l  &  \To_{r_{10}}  &  \neg b  &   &  \\

\To_{r_{11}}  &       m  &  \To_{r_{12}}  &  \neg b  &  &  \\
\To_{r_{13}}  &  \neg m  &  \To_{r_{14}}  &  \neg c  &  &  \\

\To_{r_{15}}  &       n  &  \To_{r_{16}}  &  \neg c  &  &  \\
\To_{r_{17}}  &  \neg n  &  \To_{r_{18}}  &  \neg a.  &  &  \\
\end{array}
\]
To contract $p$, we must block derivations of $+\partial a$, $+\partial b$ and
$+\partial c$. This can be obtained by adding the following tuples to the
superiority relation: $(r_{7},r_{9})$, $(r_{11},r_{13})$ and
$(r_{15},r_{17})$.
\end{example}

\begin{example}\label{ex:ContrContrTaut}
\setlength{\arraycolsep}{1pt}
\[
\begin{array}{cccccc}
           &          &  \To_{r_{19}}     &       e  &  \To_{r_{20}}   &  p \\
           &          &  \To_{r_{21}}     &       f  &  \To_{r_{22}}   &  p \\

           &       n  &  \To_{r_{23}}  &  \neg e  &  &  \\
           &  \neg n  &  \To_{r_{24}}  &  \neg f &  &  \\
           &  \neg m  &  \To_{r_{25}}  &  \neg f.  &  &  \\
\end{array}
\]
To contract $p$, we must now block derivations also of $+\partial e$, and
$+\partial f$. Derivation of $e$ can be blocked only if we prove the
antecedent of $r_{23}$, that is $n$ (the derivation of $c$ is blocked as
well). This implies that the derivation of $f$ is blocked only if
$+\partial\neg m$ holds (the only antecedent of rule $r_{25}$). We can now
operate only on the provability of either $l$, or $\neg l$. In both cases, one
between $a$ or $b$ cannot be refuted.
\end{example}

\noindent In Subsection~\ref{subsec:np_completeness} we have shown that, in
general, revising a defeasible theory using only the superiority relation is
an NP-complete problem. This suggests that there might not be a simple
condition, based on proof tags, that can be computed in polynomial time and
also guarantees a successful contraction.

\vspace{5mm}

\noindent The fifth AGM postulate states that contracting, and then expanding
by the same belief $\psi$ will give back at least the initial theory.
\begin{flalign}
	\tag{$K \dot{-} 5$} \text{If } p \in BS^{+\partial}(D) \text{ then } BS(D) \subseteq BS((D^-_p)^+_p). &&
\end{flalign}
This postulate cannot be adopted since, once the contracted theory has been
obtained, the backward step does not uniquely correspond to expanding the
obtained theory by the same literal, as the following example shows.

\begin{example}\label{ex:Contr5}
Let $D$ be a defeasible theory having the following rules:
\setlength{\arraycolsep}{1pt}
\[
\begin{array}{cccc}
\To_{r_1}     &      a        &  \To_{r_2}   &  p \\
 \vee           &                &                   &     \\
\To_{r_3}     & \neg a    &                    &     \\
                  &                &                    &     \\ 
\To_{r_4}     &      b        &  \To_{r_5}   &  p \\
\To_{r_6}     & \neg b    &                    &     \\
\end{array}
\]
If we contract $D$ by $p$, in the contracted theory the preference $r_1 > r_3$
is no longer present. If we now expand $D^-_p$, one solution is the initial
theory, but also the theory containing the preference $r_4 > r_6$ is another
valid solution.
\end{example}
Nevertheless, if all operations in the contraction process can be traced, then
we can easily backtrack and obtain the initial theory, satisfying the
postulate.

\vspace{5mm}

\noindent The sixth AGM postulate, also known as the postulate \emph{of the
irrelevancy of syntax}, states that if two beliefs $\psi$ and $\chi$ are
logically equivalent, then contracting by $\psi$ and contracting by $\chi$
produce the same result.
\begin{flalign}
	\tag{$K \dot{-} 6$} \text{If } \vdash p \equiv q \text{ then }
	BS(D^-_p) = BS(D^-_q). &&
\end{flalign}
In the framework of Defeasible Logic, the language is restricted to literals, thus two
elements $p$ and $q$ are equivalent only if they represent the same literal.
For this reason, the sixth postulate straightforwardly follows.

\vspace{5mm}

\noindent The seventh and the eighth postulate are best understood if seen in
combination. They essentially relate two individual contractions with respect
to a pair of sentences $\psi$ and $\chi$, with the contraction of their
conjunction $\psi \wedge \chi$. As already stated, in Defeasible Logic there are no
logical connectives, and a conjunction of literals is equivalent to the set of
the same literals; the same reasoning used to introduce postulate $(K \dot{-}
2)$ applies here. Thus, the two postulates can be rewritten as
\begin{flalign}
	\tag{$K \dot{-} 7$}
	\begin{array}[t]{l}
		BS^{+\partial}(D^-_p) \cap BS^{+\partial}(D^-_q) \subseteq BS^{+\partial}(D^-_{p,q}) \text{ and}\\
		BS^{-\partial}(D^-_p) \cap BS^{-\partial}(D^-_q) \supseteq BS^{-\partial}(D^-_{p,q}).
	\end{array} &&
\end{flalign}
\begin{flalign}
	\tag{$K \dot{-} 8$}
	\begin{array}[t]{l}
		\text{If } p \in BS^{-\partial}(D^-_{p,q}) \text{ then } BS^{+\partial}(D^-_{p,q}) \subseteq BS^{+\partial}(D^-_p) \text{ and}\\
		BS^{-\partial}(D^-_p) \subseteq BS^{-\partial}(D^-_{p,q}).
	\end{array} &&
\end{flalign}
Postulates $(K \dot{-} 7)$ and $(K \dot{-} 8)$ do not hold for the same reason
formulated for postulate $(K\dot{-}2)$. The following example shows the truth
of the statement for both of them.

\begin{example}\label{ex:Contr78}
Let $D$ be a defeasible theory having the following rules:
\setlength{\arraycolsep}{1pt}
\[
\begin{array}{cccccccc}
\To_{r_3}    &   \neg a    &                    &             &   &                          &                  &      \\
 \wedge      &                &                    &             &    &                         &                  &      \\
\To_{r_1}     &      a        &  \To_{r_2}   &  c         &    &                         &                  &      \\
                   &                &      \vee       &             &    &                         &              &   \\  
                   &                &  \To_{r_3}   & \neg  c  &\qquad &\neg c, \neg d &    \To_{r_4}   &   p \\ 
                   &                &  \To_{r_5}   & \neg  d  &   &                          &    \vee      & \\ 
                   &                &     \wedge   &              &   &                          &  \To_{r_6}   &  \neg p  \\ 
  \To_{r_7}  &      b        &  \To_{r_8}   &     d        &   &                          &                  &  \\ 
 \vee           &                &                    &             &    &                         &                  &      \\                                                       
\To_{r_9}    &   \neg b    &                    &             &   &                          &                  &      \\
\end{array}
\]
In this theory, we have $BS^{+\partial}(D) = \{a, b, c, d, \neg p\}$, and
$BS^{-\partial}(D) = \{\neg a, \neg b, \neg c, \neg d, p\}$. Let us contract
$D$ by literal $a$ and by literal $b$ (where the contractions are minimal with
respect to the changes in the superiority relation) obtaining:
\begin{eqnarray}
BS^{+\partial}(D^-_a) &=& \{b, \neg c, d, \neg p\}\nonumber\\
BS^{+\partial}(D^-_b) &=& \{a, c, \neg d, \neg p\}\nonumber\\
BS^{-\partial}(D^-_a) &=& \{a, \neg a, \neg b,  c, \neg d, p\}\nonumber\\
BS^{-\partial}(D^-_b) &=& \{\neg a, b, \neg b,  \neg c, d, p\}.\nonumber
\end{eqnarray}
The respectively intersections are: 
\begin{eqnarray}
BS^{+\partial}(D^-_a) \cap BS^{+\partial}(D^-_b) &=& \{\neg p\}\nonumber\\
BS^{-\partial}(D^-_a) \cap BS^{-\partial}(D^-_b) &=& \{\neg a, \neg b, p\}.\nonumber
\end{eqnarray}
We can now contract $a$ and $b$ simultaneously, and obtain 
\begin{eqnarray}
BS^{+\partial}(D^-_{a, b}) &=& \{\neg c, \neg d, p\}\nonumber\\
BS^{-\partial}(D^-_{a, b}) &=& \{a, \neg a, b, \neg b,  c, d, \neg p\}\nonumber
\end{eqnarray}
proving our claim.
\end{example}

Throughout postulates $(K \dot{-} 1)$ to $(K \dot{-} 8)$ we took care of the
transition effects of the contraction process, due to the specific nature of
positive and negative beliefs in Defeasible Logic. However, for each postulate this
specificity has no effect. In fact, what can be claimed for contractions in
$BS^{+\partial}$ extends to $BS^{-\partial}$, and vice versa.

For the sake of completeness, we apply the same care to expansion and revision
cases further on. As it will be clear at the end of each analysis, analogous
conclusions about the redundancy are derived.

\subsection{Preference revision}\label{subsec:AGMRev}

Throughout this subsection, we assume that $D \vdash +\partial \non p$ and $D
\vdash +\Sigma p$ for a literal $p$ in $D$.

\vspace{5mm}

\noindent The first AGM postulate for revision states that the revision
process has to preserve the logical closure of the initial theory.
\begin{flalign}
	\tag{$K * 1$} D^*_p \text{ is a theory}. &&
\end{flalign}
The reasoning is the same made for the first postulate for contraction and
expansion, and it assures that also $(K*1)$ is satisfied in our framework.

\vspace{5mm}

\noindent The second AGM postulate for revision captures the most general
interpretation of theory change; the new information $\psi$ is always
included in the new belief set, even if $\psi$ is self-inconsistent, or
contradicts some belief of the initial theory. Henceforth, the complete
reliability of the new information is always assumed.
\begin{flalign}
	\tag{$K * 2$} p \in BS^{+\partial}(D^*_p). &&
\end{flalign}
As by definition of our second canonical case, literal $p$ is forced to be
defeasibly proved after the process, provided that preconditions $+\partial
\non p$ and $+\Sigma p$ hold, the postulate is clearly satisfied.

\vspace{5mm}

\noindent The third and the fourth postulates of AGM revision explain the
relationship between the revision and the expansion processes. The
quintessential meaning is that they are independent by operators implementing
them.
\begin{flalign}
	\tag{$K * 3$} BS^{+\partial}(D^*_p) \subseteq BS^{+\partial}(D^+_p). &&
\end{flalign}
\begin{flalign}
	\tag{$K * 4$} \text{If } \non p \in BS^{-\partial}(D)
	\text{ then } BS^{+\partial}(D^+_p) \subseteq BS^{+\partial}(D^*_p). &&
\end{flalign}
Both the first two canonical cases, starting from an initial theory and
considering a literal $p$, operate to obtain a final theory where $+\partial
p$ holds. What we have to care about, however, are the preconditions for which
these two canonical cases apply. The third postulate essentially states that
every belief which can be derived revising a theory by a belief $\psi$ can be
also obtained expanding the same initial theory with respect to the same
belief. This statement is perfectly allowed in our framework; the case where
both revision and expansion can apply is when $+\partial \non p$ (and hence
$-\partial p$) holds in the initial theory; in this situation, the two
processes behave in the same manner, i.e., they calculates the same
extensions. However, if we regard at proper expansion, i.e., when condition
$-\partial \non p$ holds, then it is easy to see that the preconditions for
expansion and revision are mutually exclusive and can not be applied at the
same time.



\vspace{5mm}

\noindent The fifth AGM postulate states that the result of a revision by a
belief $\psi$ is the absurd belief set iff the new information
is in itself inconsistent.
\begin{flalign}
	\tag{$K * 5$} \text{If } p \text{ is consistent then } BS^{+\partial}(D^*_p) \text{ is also consistent}. &&
\end{flalign}
Since by definition a literal $p$ is always consistent, and the extension of a
consistent theory is also consistent, the postulate is trivially satisfied.

\vspace{5mm}

\noindent The sixth AGM postulate for revision follows the same idea of ($K
\dot{-} 6$): the syntax of the new information has no effect on the revision
process, all that matters is its content. Again, it has a natural counterpart
in Defeasible Logic.
\begin{flalign}
	\tag{$K * 6$} \text{If } \vdash p \equiv q \text{ then }
	BS^{+\partial}(D^*_p) = BS^{+\partial}(D^*_q). &&
\end{flalign}
The reasoning is the same exploited in the counterpart postulate for
contraction, and the postulate is straightforward.

\vspace{5mm}

\noindent The seventh and the eight postulate of AGM revision cope with the
revision process with respect to conjunctions of literals. In the classical
AGM framework, the principle of minimal change takes an important role in the
formulation of these postulates. The revision with both $\psi$ and $\chi$
should correspond to a revision of the theory with $\psi$ followed by an
expansion by $\chi$, provided that $\chi$ does not contradict the beliefs in
the theory revised by $\psi$.
\begin{flalign}
	\tag{$K * 7$} BS^{+\partial}(D^*_{p, q}) \subseteq
	BS^{+\partial}((D^*_p)^+_q) \text{ and } BS^{-\partial}((D^*_p)^+_q) \subseteq BS^{-\partial}(D^*_{p, q}). &&
\end{flalign}
\begin{flalign}
	\tag{$K * 8$}
	\begin{array}[t]{l}
		\text{If } \neg q \in BS^{-\partial}(D^*_p) \text{ then } BS^{+\partial}((D^*_p)^+_q) \subseteq BS^{+\partial}(D^*_{p,q}) \text{ and}\\
		BS^{-\partial}(D^*_{p, q}) \subseteq BS^{-\partial}((D^*_p)^+_q).
	\end{array} &&
\end{flalign}
Again, since the sceptical nature of Defeasible Logic, these postulates cannot be satisfied.
The following example gives a specific case that falsifies them.

\begin{example}\label{ex:Rev78}
Let $D$ be a defeasible theory having the following rules:
\setlength{\arraycolsep}{1pt}
\[
\begin{array}{cccc}
\To_{r_1}     &  \neg a    &				       &			   \\
\To_{r_2}     &          a    & \To_{r_3}      &  		   p  \\
                   &                & \To_{r_4}      &  \neg p  \\
\To_{r_5}     &          b    &                     &  		       \\
\To_{r_6}     &  \neg b    &				       &			   \\
                   &          b    & \To_{r_7}      &  		   p  \\
                   &          b    & \To_{r_8}      &  		   q   \\
                   &                & \To_{r_9}      &  \neg q   \\
\To_{r_{10}} &          c    & \To_{r_{11}}  &   q          \\
\end{array}
\]
Given theory $D$, we have $BS^{+\partial}(D) = \{\neg p\}$, while all other
literals are in $BS^{-\partial}(D)$. Revising $D$ for $p$ and $q$, one
possible theory is $D^*_{p, q}$, obtained operating through the provability of
literal $b$ and adding the following superiorities: $r_5 > r_6$, $r_7 > r_4$,
and $r_8 > r_9$. The resulting $BS^{+\partial}(D^*_{p, q})$,
$BS^{-\partial}(D^*_{p, q})$ are respectively $\{b, c, p, q\}$, and $\{a, \neg
a, \neg b, \neg p, \neg q\}$.

Now, let us consider the revision just by $p$. A possible solution is $D^*_p$
such that $BS^{+\partial}(D^*_{p}) = \{a, c, p\}$, and $BS^{-\partial}(D^*_p)
= \{\neg a, b, \neg b, \neg p, q, \neg q\}$. In this case the revision process
acts on the provability of literal $a$, adding $r_2 > r_1$, and $r_3 > r_4$.

If we expand $D^*_p$ by $q$, one possible resulting theory is $(D^*_p)^+_q$
(the expansion process now operates on the provability of literal $c$ adding
$r_{11} > r_9$) where $BS^{+\partial}((D^*_p)^+_q) = \{a, c, p, q\}$, and
$BS^{-\partial}((D^*_p)^+_q) = \{\neg a, b, \neg b, \neg p, \neg q\}$. The
intersection of $D^*_{p, q}$ and $(D^*_p)^+_q$ is not empty, but neither
theory is contained in the other.
\end{example}

\subsection{Preference expansion}\label{subsec:AGMExp}

Throughout this subsection, we assume that for a literal $p$ in $D$ both $D
\vdash -\partial p$ and $D \vdash -\partial \non p$. Moreover, $+\Sigma p$
holds.

\vspace{5mm}

\noindent The first AGM postulate for expansion states that if a theory is
expanded with a belief $\psi$, then the resulting theory is the logical
closure of the initial theory.
\begin{flalign}
	\tag{$K + 1$} D^+_p \text{ is a theory}. &&
\end{flalign}
The same idea for postulate ($K \dot{-} 1$) can be exploited, thus the
postulate is clearly satisfied.

\vspace{5mm}

\noindent The second AGM postulate for expansion assures that a belief
$\psi$ for which the expansion is performed always belongs to the belief
set of the resulting theory.
\begin{flalign}
	\tag{$K + 2$} p \in BS^{+\partial}(D^+_p). &&
\end{flalign}
By the hypotheses given at the beginning of this subsection, the postulate
trivially holds since the expansion process forces literal $p$ to be
defeasibly proved.

\vspace{5mm}

\noindent The joint formulation of the third and the fourth AGM postulates for
expansion states that if a belief is already present in the initial belief
set, then an expansion process lets the theory unchanged.
\begin{flalign}
	\tag{$K + 3$} BS^{+\partial}(T) \subseteq
	BS^{+\partial}(T^+_p) \text{ and } BS^{-\partial}(T^+_p) \subseteq
	BS^{-\partial}(T). &&
\end{flalign}
\begin{flalign}
	\tag{$K + 4$}  \text{If } p \in BS^{+\partial}(T)
	\text{ then } BS^{+\partial}(T^+_p) \subseteq BS^{+\partial}(T) \text{ and } BS^{-\partial}(T) \subseteq BS^{-\partial}(T^+_p). &&
\end{flalign}
Since we aim at obtaining a theory where $+\partial p$ holds, and by
hypothesis $p \in BS^{+\partial}(T)$, the postulates seen together trivially
hold given that, by definition, its premise does not hold.

\vspace{5mm}

\noindent The fifth AGM postulate states that if a belief set is contained in
another one, then the expansion of both sets wrt the same belief preserves the
relation of inclusion.
\begin{flalign}
	\tag{$K + 5$} \text{If } BS^{+\partial}(D) \subseteq
	BS^{+\partial}(D') \text{ then } BS^{+\partial}(D^+_p) \subseteq
	BS^{+\partial}(D'^+_p). &&
\end{flalign}
Also in this case, because of the sceptical non-monotonic nature of Defeasible Logic, this
postulate can not be satisfied, as already pointed out in the explanation of
Postulate $(K\dot{-}2)$.

\vspace{5mm}

\noindent Non-monotonic formalisms derive conclusions that are tagged. The
specific nature of this tagging is that it makes the notion of minimality for
a set of conclusions useless. We can consider minimality only for one given
tag, and not for all tags (this is particularly obvious for formalisms where
tags of formulae are interdependently defined). Thus, the idea of ``smallest
resulting set'' is meaningless in non-monotonic systems. The sixth AGM
postulate assures the minimality of the expanded belief set.
\begin{flalign}
	\tag{$K+6$}
	\begin{array}[t]{l}
		\text{Given a theory } D \text{ and a belief } p\text{, }\\
		BS(D^+_p) \text{ is the smallest belief set satisfying }
		(K+1)-(K+5).
	\end{array} &&
\end{flalign}
In the perspective of non-monotonic reasoning, the operation of expanding a
defeasible theory to prove a literal $p$ (only changing the preference
relation) necessarily falsifies some other literals, previously provable in
the initial theory.

\subsection{Preference identities}

In AGM framework a process that defines revision in terms of expansion is
available, suggested by Isaac Levi \cite{levi}. The idea is that to revise a
theory $D$ by a belief $\psi$ we may firstly contract $D$ by $\neg \psi$
in order to remove any information that may contradict $\psi$, and then
expand the resulting theory with $\psi$. This is know as the \emph{Levi
identity}, which can be rewritten using our terminology as:
\begin{flalign}
\tag{LI} BS(D^*_p) = BS((D^-_{\neg p})^+_p). &&
\end{flalign}
The following example shows that the Levi identity does not hold in our
framework.
\begin{example}\label{ex:IdLevi}
Let $D$ be a defeasible theory having the following rules:
\setlength{\arraycolsep}{1pt}
\[
\begin{array}{cccc}
\To_{r_1}     &      a        &  \To_{r_2}   &          p    \\
                   &                &                   &                \\
\To_{r_3}     & \neg a    &                    &                \\
                  &                &   \To_{r_4}   &  \neg p    \\ 
\To_{r_5}     &      b        &  \To_{r_6}   &          p   \\
\To_{r_7}     & \neg b    &                    &               \\
\end{array}
\]
If we revise $D$ by $p$, a possible solution is $D^*_p$ such that
$BS^{+\partial}(D^*_p) = \{a, p\}$, and $BS^{-\partial}(D^*_p) = \{\neg a, b,
\neg b, \neg p\}$. Now, contracting $D$ by $\neg p$ can lead to $D^-_{\neg p}$
with $BS^{+\partial}(D^-_{\neg p}) = \{b\}$, and $BS^{-\partial}(D^-_{\neg p})
= \{a, \neg a, \neg b, p, \neg p\}$. If we expand $D^-_{\neg p}$ by $p$, we
obtain $(D^-_{\neg p})^+_p$ with $BS^{+\partial}((D^-_{\neg p})^+_p) = \{b,
p\}$, and $BS^{-\partial}((D^-_{\neg p})^+_p) = \{a, \neg a, \neg b, \neg
p\}$.
\end{example}
The Levi identity does not hold as our revision procedure
concerns the reasons why one belief is obtained and not only whether we have
one belief. Thus when there are multiple reasons to justify one belief, it is
possible to contract the theory in multiple ways, and similarly to expand it
in multiple ways, and the changes for the contractions are not necessarily the
`opposite' of those for contraction.

As Levi Identity relates the revision process in terms of expansion, Harper
proposed a method to obtain the contraction using revision \cite{Gardenfors};
the underlying idea is that a theory $D$ contracted by a belief $\psi$ is
equivalent to the theory containing only the information that remain unchanged
during the process of revising $D$ by $\neg \psi$. In our terms, the
\emph{Harper Identity} can be rewritten as
\begin{flalign}
\tag{HI} BS(D^-_p) = BS(D^*_{\neg p}) \cap BS(D). &&
\end{flalign}
Harper Identity does not hold for the operations we defined in this paper.
Example \ref{ex:IdHarper} provides a counter-example to it.
\begin{example}\label{ex:IdHarper}
Let $D$ be a defeasible theory having the following rules:
\setlength{\arraycolsep}{1pt}
\[
\begin{array}{cccc}
\To_{r_1}  &      p   &  \To_{r_2}  &       q  \\
           &          &    \vee     &          \\
\vee       &          &   \To_{r_3} &  \neg q  \\
           &          &    \wedge   &          \\
\To_{r_4}  &  \neg p  &  \To_{r_5}  &      q        
\end{array}
\]
The initial belief set is $BS^{+\partial}(D) = \{p, q\}$ and
$BS^{-\partial}(D) = \{\neg p, \neg q\}$. If we contract $D$ by $p$, we obtain
a theory $D^-_p$ such that $BS^{+\partial}(D^-_p)$ is $\{\neg q\}$ and
$BS^{-\partial}(D^-_p)$ contains all the other literals. Instead, if we revise
the initial theory with $\neg p$ the theory $D^*_{\neg p}$ where
$BS^{+\partial}(D^*_{\neg p}) = \{\neg p, q\}$ and $BS^{-\partial}(D^*_{\neg
p}) = \{p, \neg q\}$ is obtained. The intersections between the revised theory
and the initial one are $BS^{+\partial}(D^*_{\neg p}) \cap BS^{+\partial}(D) =
\{q\}$ and $BS^{-\partial}(D^*_{\neg p}) \cap BS^{-\partial}(D) = \{\neg q\}$.
\end{example}
Again, the main reason for the failure of the Harper Identity resides in the
non-monotonic nature of Defeasible Logic, where, in general it is not possible
to control the consequences of a given formula.

\medskip 

In this section we have provided an interpretation of the AGM postulates for
expansion, contraction and revision in terms of our canonical cases and the
operations that are possible when the changes operate only on the superiority
relation. 

We believe that the contribution of this section is multi-fold.
First of all, the definition of our canonical cases offer a more precise
formal understanding of the intuition of the various operations. Second, we
reconstructed the postulates for the canonical cases\footnote{Notice that
while the main analysis in this paper is specific to revision of the
superiority relation of Defeasible Logic, the definition of the canonical
cases does not depend on it, and it can be applied in a much broader context.
For example the canonical case from $+\partial p$ to $+\partial\neg p$ can be
understood as ``how do we modify a theory such that before the revision a
formula holds, and after the revision the opposite holds?''; similarly for the
other canonical cases.} and discuss how to adapt them. The last contribution
of the analysis confirms the outcome of \cite{ki99}, showing that in general
the postulates describing inclusion relationships between belief sets before
and after a revision operation do not hold for Defeasible Logic, and it is
unreasonable to expect that they hold for non-monotonic reasoning in general.

\section{Related Work} 
\label{sec:related_work}

As far as we are aware of, the work most closely related to ours is that of
\cite{DBLP:conf/ijcai/InoueS99} where the authors study, given a theory, how
to abduct preference relation to support the derivation of a specific
conclusion. Therefore the problem they address is conceptually
different from what we presented in this paper, given that we focus on modifying
the superiority relation.

Notice that in non-monotonic reasoning, a revision is not necessarily
triggered by inconsistencies. \cite{ki99} investigates revision for Defeasible
Logic and relationships with AGM postulates. While their ultimate aim is similar
to that of the present paper -- i.e., transforming a theory to make a
previously provable (resp.\ non provable) statement, non provable (resp.\
provable) -- the approach is different, and more akin to standard belief
revision. More precisely, revision is achieved by introducing new exceptional
rules. Furthermore, they discuss how to adapt the AGM postulates for
non-monotonic reasoning.


Our work is motivated by legal reasoning, where preference revision is just
one of the aspects of legal interpretation. \cite{aicol2009,kr2010} propose a
Defeasible Logic framework to model extensive and restrictive legal
interpretation. This is achieved by using revision mechanisms on constitutive
rules, where the mechanism is defined to change the strength of existing
constitutive rules. Based on the specific type of norm to modify, they propose
a revision (contraction) operator which modifies the theory by adding
(removing) facts, strict rules, or defeaters, raising the question whether
extensive and restrictive interpretation can be modelled as preference
revision operators. An important aspect of legal interpretation is finding the
legal rules to be applied in a case: in this work we assumed that the relevant
rules have already been discovered, and in case of conflicts, preference
revision can be used to solve them.

Another work, related to revision of Defeasible Logic is that of 
\cite{GovRot:igpl09}, where the key idea is to model revision operators
corresponding to typical changes in the legal domain, specifically, abrogation
and annulment. They show that, typically, belief revision methodologies are
not suitable to changes in theories intended for legal reasoning. Similarly, they show that it is possible to revise theories fully
satisfying the AGM postulates, but then the outcome is totally meaningless
from a legal point of view.

The connection between sceptical non-monotonic formalisms and argumentation is
well known in literature; in \cite{jlc:argumentation}, authors adapt Dung's
argumentation framework \cite{DBLP:conf/iclp/Dung93a,DBLP:journals/ai/Dung95}
to give an argumentation semantics for Defeasible Logic: first, they prove
that Dung's grounded semantics characterises the \emph{ambiguity propagating
DL}; then, they show that the \emph{ambiguity blocking DL} is described with
an alternative notion of Dung's acceptability. The main effort was to establish close connections between defeasible
reasoning and other formulations of non-monotonic reasoning.

Non-monotonic revision through argumentation was also investigated in
\cite{DBLP:conf/aaai/MoguillanskyRFGS08,DBLP:conf/comma/MoguillanskyRFGS10}
using Defeasible Logic Programming (D\textsc{e}LP). They define an argument
revision operator that inserts a new argument into a defeasible logic program
in such a way that this argument ends up undefeated after the revision, thus
warranting its conclusion, where a conclusion $\alpha$ is warranted if there
exists a non-defeated argument supporting it. Despite the meaning
given in this work, their concept of \emph{defeaters} denotes stronger counter-arguments to a
given conclusion based on a set of \emph{preferences} stating which argument
prevails against one other.

Their work suffers from a main drawback: imposing preferences among
arguments (i.e., whole reasoning chains in our framework), instead of single
rules, can lead to a situation when an argument is warranted even if all its sub-arguments are defeated.



D\textsc{e}LP formalism is very similar to Defeasible Logic. Therefore,
techniques proposed in this work can be easily accommodated to join the
framework presented in
\cite{DBLP:conf/aaai/MoguillanskyRFGS08,DBLP:conf/comma/MoguillanskyRFGS10}.

Other works closely related to ours are
\cite{DBLP:journals/jancl/PrakkenS97,DBLP:journals/ijis/Antoniou04,DBLP:journals/jair/Brewka96,DBLP:conf/jelia/Modgil06}.
They propose extensions of an argumentation framework, Defeasible Logic and
Logic Programming, where the superiority relation is dynamically derived from
arguments and rules in given theories. While the details are different for the
various approaches, the underlying idea is the same. For example, in
\cite{DBLP:journals/ijis/Antoniou04}, it is possible to have rules of the form
$r: a \Rightarrow (s > t)$ where $s$ and $t$ are identifiers for rules.
Accordingly, to assert that rule $s$ is stronger than rule $t$ we have to be
able to prove $+\partial a$ and that there are no applicable rules for $\neg (s>t)$. In
addition, the inference rules require that instances of the superiority
relation are provable (e.g., $+\partial(s>t)$) instead of being simply given
(as facts) in $>$, i.e., $(s,t)\in >$. The main difference with these works is
that we investigate general conditions under which it is possible to modify
the superiority relation in order to change the conclusions of a theory, while
they provide specific mechanisms to compute conclusions where the preference
relations are inferred from the context. They do not study which are the
possible ways to revise a theory. For example, if a literal is
$>$-tautological, no matter how we derive instances of the preference
relation, there is no way to prevent its derivation, or to derive its
negation.

In the scenario where the preferences over rules are computed dynamically, one
could argue that it might be possible to encode in the theory the possible
ways in which the superiority relation would behave. The problem with this
approach is the combinatorial explosion of the number of rules required, since
one would have to consider rules with the form $a_{1},\dots,a_{n}\Rightarrow
(r_{i}>r_{j})$ for all possible combinations of literals $a_{k}$ in the
theory, and also for all possible combinations of instances of $>$. In both
cases there is an exponential number of combinations. Among the works
mentioned above, \cite{DBLP:journals/jancl/PrakkenS97} is motivated, as us, by
legal reasoning, and they use rules to encode the legal principles we shortly
discussed in the introduction.


\section{Conclusions and further work}\label{sec:Conclusions}
Over the years Defeasible Logic has proved to be simple but effective
practical non-monotonic formalism suitable for applications in many areas. Its
sceptical nature allows to have defeasible proofs both for a belief and its
opposite, and still be consistent since, at most, one of them can be finally
proven. Since from its first formulation in \cite{nute}, many theoretical
aspects of Defeasible Logic have been studied: from its proof theory
\cite{tocl} to relationships to logic programming \cite{tplp:embedding}, from
variants of the logic \cite{tocl:inclusion} to its semantics
\cite{jlc:argumentation} and computational properties \cite{complexity}.
Furthermore, several efficient implementations have been developed
\cite{dr-device,dr-prolog,spindle}. Methods to revise, contract, or expand a
defeasible theory were first proposed in \cite{ki99}, where the authors
studied how to revise the belief set of a theory based on introduction of new
rules. The resulting methodology was then compared to the AGM belief revision
framework.

In this work we took a different approach: since, in many situations, a person
cannot change the rules governing a system (a theory) but only the way each
rule interact with the others, it seems straightforward to consider revision
methodologies of Defeasible Logic where derivation rules are considered as ``static'' or
``untouchable'', and the only way to change a theory with respect to a
statement is to modify the relative strength of a rule with respect to another
rule, i.e., to modify the superiority relation of the analysed theory.

Therefore, we presented in Section \ref{sec:DefLogPRDL}
the formalism adopted: eight different types of tagged literals were described
to simplify the categorisation process and, consequently, the revision
calculus. In Section \ref{sec:PrefDefRevisionPRDL}, we introduced three
\textit{canonical cases} of possible revisions and systematically analysed every \textit{canonical instance}. In both sections, we presented
several theoretical results on conditions under which a revision process is
possible.

Upon these theoretical basis, in Section \ref{sec:AGMComparisonPRDL}, we
proposed a systematic comparison between our framework and the AGM postulates.
In there, the three canonical cases were compared to the AGM contraction,
expansion, and revision: for each belief change operator, all the AGM
postulates were rewritten using our terminology, and their validity was
studied in our framework.

The work presented in this paper paves the way to several lines of further
investigation to extend the proposed change methodologies.

The first extension we want to mention is that where we change the status of a
sets of literals instead of a single literal. Studying conditions (supporting
chains, proof tags, and so on) to understand when, and where, it is possible
to change a theory by more than a single literal is not a trivial issue.
Consider the following theory

\begin{example}
$D =$
\setlength{\arraycolsep}{1pt}
\[
\begin{array}{cccccc}
                    &               &   \To_{r_1}   &    \neg b   &   \To_{r_2}   &   q  \\
   \To_{r_3}   &     a        &   \To_{r_4}   &            b    &   \To_{r_5}   &   p  \\
   \wedge     &                &                    &                  &                    &       \\
   \To_{r_6}   & \neg  a.   &    &      &                    &       \\  
\end{array}
\]
It is clear it is not possible to change the initial theory if we want to obtain both $+\partial p$ and $+\partial q$.
\end{example}

The second extension concerns how to limit the scope of the revision
operators. Revision of preferences should not involve minimal defeasible rules. This constraint captures the idea that a rule that wins against all other rules is a basic juridical principle. A similar aspect is that under given circumstances the revision process should not, for at least a subset of
``protected'' pairs, violate the original preferential order. For instance, we
should not revise those preferences that are unquestioned because derived by
commonly accepted principles or explicitly expressed by the legislator, as
discussed in the introduction.

As we have seen in Section~\ref{sec:Norms}, in the legal domain we can
identify several sources for the preference relation. Preference handling in
Defeasible Logic can gain much from typisation of preferences themselves. The
notion of preference type and its algebraic structure has been studied
previously and can be applied directly here \cite{Cristani02}. Analogously,
one of the possible directions of generalisation for the notion of preference
is the notion of partial order, investigated at a combinatorial level by
\cite{Duntsch94} and then studied from a computational viewpoint in
\cite{Cristani04}.

The main aim of the paper was to identify conditions under which revision
based on changes of the superiority relation was possible. Accordingly, the
next important aspect of belief revision is to identify criteria of \emph{minimal change}. It is possible to give alternative definitions of
minimal revision. For example, one notion could be on the cardinality of
instances of the superiority relation, while another one is to consider
minimality with respect to the conclusions derived from a theory. A few
research questions naturally follow: `Are
there conditions on a theory to guarantee that a revision is minimal?', or `Is it possible to compare different minimality criteria?'.

We illustrate some of these issue with the help of the following example.

\begin{example}\label{ex:Concl}
Let $D$ be the following theory
\setlength{\arraycolsep}{1pt}
\[
\begin{array}{cccccccc}
             &          &  \To_{r_1}  &       a  &  \To_{r_2}  &  b  &  \To_{r_{3}}  &  p  \\
             &          &  \To_{r_4}  &  \neg a  &  \To_{r_{5}}  &  \neg b  &            &  \\
\To_{r_{6}}  &       c  &  \To_{r_7}  &       d  &  \To_{r_8}  &  e  &  \To_{r_{9}}  &  p  \\        
\wedge &          &             &          &             &     &  & \\
\To_{r_{10}} &  \neg c  &             &          &             &     &  &
\end{array}
\]
\begin{eqnarray}
>'  &=& \{(r_{6}, r_{10})\} \nonumber\\	
>'' &=& \{(r_{1}, r_{4}), (r_{2}, r_{5})\}. \nonumber
\end{eqnarray}
Superiority relation $>'$ guarantees to change only one preference, but modify the extension of the former theory by five literals ($c, \neg c, d, e,$ and $p$), while $>''$, by adding two preferences, changes only three literals ($a, b$ and $p$).
\end{example}

Finally, the present work provides a further indication that the AGM
postulates are not appropriate for belief revision of non-monotonic reasoning.
Consequently, a natural question is whether there is a set of rational
postulates for this kind of logics. We are sceptical about this endeavor:
there are many different and often incompatible facets of non-monotonic
reasoning, and a set of postulates might satisfy some particular
non-monotonic features but not appropriate for others. For example, as we have
seen in this paper, if we ignore monotonic conclusions (conclusions tagged
with $\pm\Delta$), there are other cases where we cannot guarantee the success
of the revision operation. On the other hand, \cite{ki99} argues that the
success postulate for revision holds if we are allowed to operate on rules
instead of preferences. This example suggests that it might possible to find a
set of postulates, but this would specific to a logic and specific types of
operations. The quest for an alternative set of postulates for revision of
non-monotonic theories is left for future research.


\section*{Acknowledgements} 

A previous version of the paper has been presented at the 13th International
Workshop on Non-monotonic Reasoning (NMR 2010) \cite{nmr10:preferences} and to
4th International Web Rule Symposium (RuleML 2010) \cite{ruleml10preferences}.
We thank the anonymous referees for NMR 2010 and RuleML 2010 for their
valuable comments and feedback.

NICTA is funded by the Australian Government as represented by the
Department of Broadband, Communications and the Digital Economy,
the Australian Research Council through the ICT Centre of
Excellence program and the Queensland Government.

\bibliographystyle{splncs}
\bibliography{biblioPRDL}
\end{document}